\newtheorem{theorem}{Theorem}[section]
\newtheorem{lemma}[theorem]{Lemma}
\newtheorem{corollary}[theorem]{Corollary}
\newtheorem*{example}{Example}
\providecommand{\customgenericname}{}
\newcommand{\newcustomtheorem}[2]{%
  \newenvironment{#1}[1]
  {%
   \ifdefined\crefalias\crefalias{innercustomgeneric}{#2}\fi
   \renewcommand\customgenericname{#2}%
   \renewcommand\theinnercustomgeneric{##1}%
   \innercustomgeneric
  }
  {\endinnercustomgeneric}%
  \ifdefined\crefname\crefname{#2}{#2}{#2s}\fi
}
\newcommand{\R}{\mathbb{R}}
\newcommand{\N}{\mathbb{N}}
\newcommand{\Eq}[1]{\mathbb{E}_q\!\left[#1\right]}
\newcommand{\KL}{D_{\text{KL}}}
\newcommand{\KLlambda}{D_{\text{KL}}^\lambda}
\newcommand{\rom}[1]{\uppercase\expandafter{\romannumeral #1\relax}}
\DeclareMathOperator{\Poisson}{Poisson}
\DeclareMathOperator{\Pois}{Pois}
\DeclareMathOperator{\Multinomial}{Multinomial}
\DeclareMathOperator{\Cat}{Cat}
\DeclareMathOperator{\Dirichlet}{Dir}
\DeclareMathOperator{\PoissonGamma}{Poisson--Gamma}
\DeclareMathOperator*{\argmin}{argmin}
\title{On the Connection Between Non-negative Matrix Factorization and
Latent Dirichlet Allocation}
\author{%
  Benedikt Geiger \\
  Harvard University \\
  \texttt{benedikt\_geiger@hms.harvard.edu} \\
  \And
  Peter J. Park \\
  Harvard University \\
  \texttt{peter\_park@hms.harvard.edu}
}
\begin{document}

\maketitle

\begin{abstract}
Non-negative matrix factorization with the generalized Kullback--Leibler divergence (NMF) and latent Dirichlet allocation (LDA) are two popular approaches for dimensionality reduction of non-negative data. Here, we show that NMF with $\ell_1$ normalization constraints on the columns of both matrices of the decomposition and a Dirichlet prior on the columns of one matrix is equivalent to LDA. To show this, we demonstrate that explicitly accounting for the scaling ambiguity of NMF by adding $\ell_1$ normalization constraints to the optimization problem allows a joint update of both matrices in the widely used multiplicative updates (MU) algorithm. When both of the matrices are normalized, the joint MU algorithm leads to probabilistic latent semantic analysis (PLSA), which is LDA without a Dirichlet prior. Our approach of deriving joint updates for NMF also reveals that a Lasso penalty on one matrix together with an $\ell_1$ normalization constraint on the other matrix is insufficient to induce any sparsity.
\end{abstract}

\section{Introduction}

Non-negative matrix factorization (NMF) and latent Dirichlet allocation (LDA) are two prominent techniques for unsupervised machine learning and data analysis. Both approaches decompose non-negative data into an interpretable parts-based representation and have been used successfully across a wide range of areas including topic modeling, image and audio signal processing, recommendation systems, and bioinformatics \cite{jelodar2019latent,wang2012nonnegative,gillis2020nonnegative}. However, despite their conceptual similarities and widespread use, the relationship between NMF and LDA remains underexplored. Although partial results exist, prior work is written from a probabilistic perspective, and empirical comparisons often fall short to appreciate the depth of their connection.

Non-negative matrix factorization $X\approx WH$ with an appropriately chosen loss function is commonly viewed as a constrained optimization problem, with linear algebra algorithms employed to solve it. In the case of the widely used Kullback--Leibler (KL) divergence loss, the most popular approach for solving the optimization problem is the so-called multiplicative updates approach \cite{gillis2020nonnegative}. Originally derived by Richardson \cite{richardson1972bayesian} and Lucy \cite{lucy1974iterative} in the context of deconvolution, it was rediscovered and applied to NMF in the seminal work of Lee and Seung \cite{lee2000algorithms,lee1999learning}. The algorithm alternates between updating each of the matrices $W$ and $H$ of the factorization in a multiplicative manner. Latent Dirichlet allocation was introduced as a generative statistical model of text data to learn the underlying topics, topic proportions of documents, and probabilistic topic assignments of individual words \cite{blei2003latent}. As a bag-of-words model, LDA assumes that the order of the words in each document does not play a role, and only the word-document count matrix $X$ of a corpus is used during training and inference. The generative process of LDA extends that of its predecessor probabilistic latent semantic analysis (PLSA) \cite{hofmann2001unsupervised,hofmann1999probabilistic} by adding a Dirichlet prior to the topic proportions. Thus, a solution of PLSA can also be interpreted as a maximum \textit{a posteriori} estimate of LDA under a uniform Dirichlet prior \cite{girolami2003equivalence}.

In addition to these connections to LDA, PLSA is well-known to be closely related to NMF. For normalized inputs $X$, fixed points of the PLSA algorithm can be mapped to fixed points of the MU algorithm of NMF with the KL divergence, and \textit{vice versa} \cite{gaussier2005relation}. The identical result has also been shown for global maxima of the PLSA likelihood and global minima of the KL loss of NMF \cite{ding2008equivalence}. Noticing similarities between the update rules and underlying generative processes of NMF, PLSA, and LDA dates back to earlier work on unified frameworks for probabilistic matrix decomposition \cite{buntine2002variational,buntine2005discrete}. Various probabilistic latent variable models with joint parameter updates closely resembling the update rules of NMF have also been studied in \cite{shashanka2007sparse,shashanka2008probabilistic,welling2008deterministic}. However, an important detail frequently overlooked in the literature is that PLSA \textit{jointly} updates the topic and topic proportion matrices, whereas the MU algorithm of NMF \textit{sequentially} updates one matrix given the other. 

In this paper, we reveal a deeper connection between NMF and topic models like PLSA and LDA. We solve NMF with additional normalization constraints, and show that this leads to algorithms resembling or identical to the PLSA algorithm. Our main contributions are the following:

\begin{enumerate}[leftmargin=*]
    \item For NMF with the KL divergence, adding normalization constraints to the columns of one or both matrices of the decomposition makes it possible to derive \textit{joint} update rules for both matrices. The obtained algorithms save one matrix multiplication per iteration. 
    \item NMF with the KL divergence and additional normalization constraints on the columns of both matrices is equivalent to PLSA. Both the objective function and the canonical joint update rules are identical. In contrast to the existing literature, we establish an algorithmic equivalence, and our result neither requires normalizing the input matrix $X$ nor transforming the trained model parameters.
    \item With a Dirichlet prior on the columns of the matrix corresponding to the topic proportions, NMF with the KL divergence and additional normalization constraints on the columns of both matrices is equivalent to LDA. Both the objective function and the canonical variational inference algorithm are identical.
    \item Joint update equations can be derived to speed up and better understand variations of NMF. We demonstrate this by revisiting sparse NMF with the KL divergence \cite{liu2003non}. 
\end{enumerate}

Joint update rules for NMF have recently been introduced in the more general case of $\beta$-divergence loss functions \cite{marmin2023joint}. Both in \cite{marmin2023joint} and in our work, the results are based on a majorization-minimization scheme in which an upper bound of the objective function is constructed and minimized in every iteration \cite{sun2016majorization}. The approach differs from previous work by jointly considering both matrices of the decomposition instead of constructing an upper bound for one of the two matrices at a time \cite{fevotte2011algorithms,nakano2010convergence}. In contrast to \cite{marmin2023joint}, however, the additional normalizations allow us to derive a closed-form expression for the optimum of the upper bound.

That NMF with Dirichlet priors on the columns of \textit{both} matrices of the decomposition leads to LDA with Dirichlet smoothing on the topics was briefly mentioned in \cite{zhou2012beta,zhou2013negative} and suggested in \cite[Section 3.5]{cemgil2019bayesian}. By revisiting the connection between PLSA and NMF, our perspective emphasizes the role of the normalization constraints rather than the Dirichlet distribution giving rise to the connections between generative models and algorithms.
\section{Related work}

\paragraph{Bayesian NMF.}
Non-negative matrix factorization can often be interpreted in a statistical framework \cite{fevotte2009nonnegative}. For example, minimizing the KL divergence between the data matrix $X$ and its reconstruction $WH$ is equivalent to maximizing the likelihood of $W$ and $H$ in an element-wise Poisson noise model $X\sim\Poisson(WH)$. Similar connections between the loss function and the likelihood of an appropriately chosen generative model also exist for alternative reconstruction errors \cite{tan2012automatic}. In other words, the probabilistic point of view on NMF is based on the underlying statistical assumptions of the chosen reconstruction error. Through priors on $W$ or $H$, this statistical framework allows one to induce desired properties of the factorization. Bayesian extensions of NMF have, for instance, been applied to control the sparsity or smoothness of the parameters \cite{cemgil2009bayesian,virtanen2008bayesian,fevotte2009isdivergence}, automatically select the factorization rank \cite{tan2012automatic,hoffman2010bayesian}, obtain uncertainty estimates \cite{schmidt2009bayesian}, or model bounded support data \cite{ma2014variational}.

In the case of NMF with the KL divergence, the most commonly used prior is the Gamma distribution \cite{canny2004gap,virtanen2008bayesian,cemgil2009bayesian,fevotte2009nonnegative,dikmen2012maximum,mohammadiha2013supervised,paisley2014bayesian}. The main reason for this choice is the analytical convenience resulting from the Gamma distribution being the conjugate prior of the Poisson distribution. The obtained Gamma--Poisson model with fixed prior parameters has been shown to be related to LDA: When $W$ is column-normalized and Gamma priors with identical rate parameters are placed on the columns of $H$, the generative model extends LDA by also modeling the document length \cite[Lemma 1]{buntine2005discrete}. In fact, we strengthen this connection and prove that the canonical variational inference algorithm of this Gamma--Poisson model is identical to the LDA algorithm from \cite{blei2003latent} (see \Cref{appendix:gamma_poisson}). This algorithmic equivalence can be understood as the Bayesian counterpart to the connection between NMF with a normalization constraint on $W$ and PLSA.
\section{Preliminaries}\label{section_preliminaries}

Throughout the paper, our terminology draws inspiration from the topic modeling literature. That is, when considering the non-negative matrix decomposition $X\approx WH$, we use the terms (unnormalized) topic matrix to refer to $W$ and topic weights to refer to $H$. When additional normalization constraints are in place, we use (normalized) topic matrix and topic proportions instead. We use the notation $\R_+$ to refer to the non-negative real numbers, and for any positive integer $M$, we denote the $M-1$ dimensional standard simplex by $\Delta_{M-1}=\{y\in\R_+^M \, | \, \sum_m y_m = 1\}$. When using the abbreviation NMF, we refer to non-negative matrix factorization with the KL divergence unless noted otherwise.

In the context of topic modelling, the goal of NMF, PLSA, and LDA is to identify the underlying topics in a corpus of documents and to represent each document as an additive combination of the learned topics. All three methods are bag-of-words models---they assume that learning the decomposition does not require taking the order of words in the documents into account. Therefore, the input of the algorithms is the term-document matrix $X\in\R^{V\times D}_+$, where each entry $x_{vd}$ is the number of times the term $v$ occurs in document $d$. The number of topics $K$ is a hyperparameter of all three approaches.

\subsection{Non-negative matrix factorization (NMF)}
NMF decomposes the term-document matrix $X$ into two non-negative matrices $W\in\R_+^{V\times K}$ and $H\in\R_+^{K\times D}$. The columns of $W$ and $H$ can then be interpreted as the unnormalized topics and topic weights of the documents, respectively, and these parameters are learned by minimizing an appropriate ``distance`` $D(X\,|| \,WH)$ between the data matrix and its reconstruction. A common choice is the (generalized) Kullback--Leibler divergence
\begin{align}\label{equation_kl_divergence}
    \KL(X\, || \, WH) = \sum_{v,d} x_{vd}\log\frac{x_{vd}}{(WH)_{vd}}- x_{vd} + (WH)_{vd},
\end{align}
which leads to the optimization problem
\begin{align}\label{optimization_problem_nmf}
    \min_{W\in\R_{+}^{V\times K}, \, H\in\R_{+}^{K\times D}} \KL(X\, || \,WH).
\end{align}
Algorithms to solve \eqref{optimization_problem_nmf} typically alternate between updating each of the matrices $W$ and $H$ while fixing the other one \cite{hien2021algorithms}. The motivation for this approach is two-fold \cite{gillis2020nonnegative}. First, the factorization $X\approx WH$ is symmetric to $X^T \approx H^T W^T$. An update rule for one of the matrices thus immediately yields an update rule for the other. Second, the optimization problems induced by fixing each of the two matrices in \eqref{optimization_problem_nmf} are convex. The most common algorithm consists of multiplicatively updating each matrix iteratively until convergence (see \Cref{alg:mu_nmf}). It was popularized by Lee and Seung, who demonstrated the potential of learning parts-based representations and proved the monotonic convergence of the objective function \cite{lee1999learning,lee2000algorithms,wang2012nonnegative}. The key idea of their derivation is to construct a well-behaved function $G(H, H')$ for $F(H) = \KL(X\, || \,WH)$ such that
\begin{equation}
    \begin{split}
        G(H, H') &\geq F(H) \quad \text{for all } H, \, H', \\
        G(H', H') &= F(H') \quad \text{for all } H'.
    \end{split}
\end{equation}
Once such an \textit{auxiliary function} $G$ is found, the objective function $F$ can be shown to be non-increasing by updating an iterate $H^{(n)}$ to the argument of the minimum of $H \mapsto G(H, H^{(n)})$ (see \Cref{appendix:auxiliary_functions}). While the perspective on \eqref{equation_kl_divergence} and the construction of the auxiliary function in \cite{lee2000algorithms} via Jensen's inequality is entirely non-probabilistic, there is also a probabilistic framework for \Cref{alg:mu_nmf} \cite{cemgil2009bayesian,fevotte2009nonnegative}. Assuming conditional independence of the counts $X_{vd}$, the model $X_{vd} \sim \Poisson(WH)_{vd}$ leads to a log-likelihood identical to the negative KL divergence \eqref{equation_kl_divergence} (up to a constant). By augmenting the generative process to a latent variable model via
\begin{equation}\label{gen_process_nmf}
        z_{vkd} \sim \Poisson(w_{vk}h_{kd}), \quad X_{vd} = \sum_k z_{vkd},
\end{equation}
a standard expectation-maximization (EM) \cite{dempster1977maximum} procedure yields a \textit{joint auxiliary function} $G$ for the model parameters $(W,H)$, i.e.,
\begin{equation}
    \begin{split}
        G((W, H), (W', H')) &\geq \KL(X\, || \, WH) \quad \text{for all } (W,H), \, (W',H'), \\
        G((W',H'), (W',H')) &= \KL(X\, || \, W'H') \quad \text{for all } (W', H').
    \end{split}
\end{equation}
With
\begin{equation}
    \phi_{vkd}^{(n)} \coloneqq \frac{w_{vk}^{(n)} h_{kd}^{(n)}}{(W^{(n)}H^{(n)})_{vd}}
\end{equation}
and modulo terms independent of the model parameters, the auxiliary function is given by
\begin{equation}\label{nmf_joint_auxiliary_function}
    G((W, H), (W^{(n)}, H^{(n)})) = -\sum_{v,k,d} x_{vd} \phi_{vkd}^{(n)} \log \frac{w_{vk}h_{kd}}{\phi_{vkd}^{(n)}} + \sum_{v,d} (WH)_{vd},
\end{equation}
cf.\ \cite[eq.\ (18)]{cemgil2009bayesian}. Although a closed-form joint optimum does not exist, $G$ induces auxiliary functions for both individual matrices via
\begin{equation}\label{auxiliary_function_separate}
    \begin{split}
        \widetilde{G}(W, W^{(n)}) &= G((W, H^{(n)}), (W^{(n)}, H^{(n)})), \\
    \widetilde{G}(H, H^{(n)}) &= G((W^{(n)}, H), (W^{(n)}, H^{(n)})).
    \end{split}
\end{equation}
Alternating between optimizing each of the induced auxiliary functions recovers the multiplicative updates from Lee and Seung, revealing that \Cref{alg:mu_nmf} is in fact a generalized EM algorithm.

In practice, it is often desirable to increase the interpretability of the matrices and to eliminate the scaling ambiguity of the factorization caused by $WH = WDD^{-1}H$ for any invertible diagonal matrix $D\in\R^{K\times K}_{+}$. A common strategy to achieve both is to normalize the columns of $W$ to probability distributions and to scale the rows of $H$ accordingly after each iteration or after convergence.

\begin{algorithm}
    \caption{\cite[Theorem 2]{lee2000algorithms} Multiplicative updates NMF}\label{alg:mu_nmf}
    \begin{algorithmic}
        \Require $X\in\R_{+}^{V\times D}$, initializations $W^{(0)}\in\R_{+}^{V\times K}, \, H^{(0)}\in\R_{+}^{K\times D}$
        \State $n \gets 0$
        \While {$\KL(X\, || \,W^{(n)}H^{(n)})$ not converged}
        \State $w^{(n+1)}_{vk} = w^{(n)}_{vk} \frac{\sum_d x_{vd} \frac{ h^{(n)}_{kd} }{ (W^{(n)}H^{(n)})_{vd} } }{ \sum_d h^{(n)}_{kd} }$
        \State $h^{(n+1)}_{kd} = h^{(n)}_{kd} \frac{\sum_v x_{vd} \frac{ w^{(n+1)}_{vk} }{ (W^{(n+1)}H^{(n)})_{vd} } }{ \sum_v w^{(n+1)}_{vk} }$
        \State $n \gets n + 1$
        \EndWhile
    \end{algorithmic}
\end{algorithm}

\subsection{Probabilistic latent semantic analysis (PLSA)}
In PLSA \cite{hofmann2001unsupervised}, the generative process of the term-document matrix is unfolded in the sense that the words are generated individually. The scheme for each of the $n=1,\ldots,N_d$ words in document $d$ is the following:
\begin{enumerate}
    \item Sample a latent topic $z_{nd}$ with probability $p(z_{nd}=k) = h_{kd}$.
    \item Sample a word $\nu_{nd}$ with probability $p(\nu_{nd}=v|z_{nd}=k) = w_{vk}$.
\end{enumerate}
The resulting log-likelihood is given by
\begin{equation}\label{plsa_loglikelihood}
    \log p(X | W, H) = \sum_{v,d} x_{vd} \log(WH)_{vd} + c(X),
\end{equation}
where the constant $c(X)$ arises from bagging the sequence of words, cf.\ \cite[Section \rom{3}.A]{buntine2005discrete}. In the following, we will always ignore this constant because it only depends on the observed counts and does not affect model training. The relevant part of the log-likelihood \eqref{plsa_loglikelihood} therefore reduces to
\begin{equation}\label{objective_plsa}
    \mathcal{L}^{\text{PLSA}}(X | W, H) \coloneqq \sum_{v,d} x_{vd} \log(WH)_{vd}.
\end{equation}
The algorithm to fit the model parameters is an EM algorithm, described in \cite{hofmann2001unsupervised}. With the conditional distributions of the latent topic assignments for model parameters $(W^{(n)}, H^{(n)})$ equal to
\begin{align}
    p(z_{nd}=k|\nu_{nd}=v)^{(n)} = \phi_{vkd}^{(n)} \propto w_{vk}^{(n)}h_{kd}^{(n)},
\end{align}
the E-step of the derivation naturally leads to an auxiliary function of the negative objective function \eqref{objective_plsa} given by
\begin{equation}\label{plsa_joint_auxiliary_function}
    G((W, H), (W^{(n)}, H^{(n)})) = -\sum_{v,k,d} x_{vd} \phi_{vkd}^{(n)} \log \frac{w_{vk}h_{kd}}{\phi_{vkd}^{(n)}}.
\end{equation}
The minimization of $G$ during the M-step yields \Cref{alg:plsa}. While the algorithm is remarkably similar to \Cref{alg:mu_nmf} and the only difference seems to be the normalizations, it is important to note that all model parameters are updated jointly in PLSA. The conditional distributions of the topic assignments $\phi_{vkd}$ are not recomputed after updating one of the matrices.

\begin{algorithm}
    \caption{\cite[eq.\ (11), (12)]{hofmann2001unsupervised} EM algorithm PLSA}\label{alg:plsa}
    \begin{algorithmic}
        \Require $X\in\R_{+}^{V\times D}$, initializations $W^{(0)}\in\Delta_{V-1}^K, \, H^{(0)}\in\Delta_{K-1}^D$
        \State $n \gets 0$
        \While {$\mathcal{L}^{\text{PLSA}}(X | W^{(n)}, H^{(n)})$ not converged}
        \State $\phi_{vkd}^{(n)} \propto w_{vk}^{(n)} h_{kd}^{(n)}$
        \State $w^{(n+1)}_{vk} \propto \sum_d x_{vd} \phi_{vkd}^{(n)}$
        \State $h^{(n+1)}_{kd} \propto \sum_v x_{vd} \phi_{vkd}^{(n)}$
        \State $n \gets n + 1$
        \EndWhile
    \end{algorithmic}
\end{algorithm}

\subsection{Latent Dirichlet allocation (LDA)}
LDA was developed to overcome the issue that the number of model parameters of PLSA grows linearly with the number of documents on which the model is trained \cite{blei2003latent,blei_introduction_2012}. To this end, LDA introduces a model parameter $\alpha\in\R^K_{>0}$ and a Dirichlet prior $h \sim \Dirichlet(\alpha)$ that turn the topic proportions into latent variables. The generative process for each document $d$ then becomes
\begin{enumerate}
    \item Sample the topic proportions $h_d \sim \Dirichlet(\alpha).$
    \item For each of the $n=1,\ldots,N_d$ words in document $d$:
    \begin{enumerate}
        \item Sample a latent topic $z_{nd}$ with probability $p(z_{nd}=k) = h_{kd}$,
        \item Sample a word $\nu_{nd}$ with probability $p(\nu_{nd}=v|z_{nd}=k) = w_{vk}$.
    \end{enumerate}
\end{enumerate}
This results in the log-likelihood
\begin{align}
    \log p(x_d | W, \alpha) &= \log \int p(h | \alpha) \prod_v (Wh)_v^{x_{vd}} \, dh,
\end{align}
where $p(h | \alpha)$ denotes the density of the Dirichlet distribution. The introduction of the Dirichlet prior causes the posterior distribution of the latent variables $(h_d, z_d)$ to be intractable. Blei et al.\ therefore resort to variational inference (VI), which is one of the approximate inference algorithms applicable to the LDA model \cite{jordan1999introduction,blei2003latent,asuncion2012smoothing}. Following their mean-field approach, the true posterior is approximated with a variational distribution $q$ given by
\begin{align}
    q(h, z | \beta, \phi) = \prod_d q_d(h_d, z_d | \beta_d, \phi_d), \qquad q_d(h_d, z_d | \beta_d, \phi_d) = q(h_d|\beta_d) \prod_n q(z_{nd}|\phi_{nd}),
\end{align}
where $\beta_d$ and $\phi_{nd}$ are parameters of a Dirichlet and categorical distribution, respectively. In addition to the model parameters $W$ and $\alpha$, these so-called variational parameters are iteratively optimized by the algorithm. Instead of maximizing the exact log-likelihood, the idea of variational inference is to optimize a lower bound given by
\begin{equation}\label{lda_elbo}
    \mathcal{L}^{\text{LDA}}(X |W, \alpha, \beta, \phi) = \Eq{\log p(X, h, z | W, \alpha)} - \Eq{\log q(h, z | \beta, \phi)}.
\end{equation}
Here and in the following, $\Eq{\cdot}$ denotes the expectation with respect to the variational distribution. Similar to PLSA, the key insight that reveals algebraic connections to NMF is that the optimal variational approximation of the posterior of the topic assignments $z_{nd}$ does not depend on the position $n$, but only on the word $\nu_{nd}$ at position $n$, i.e., $\phi_{nd}$ can be reindexed. Together with
\begin{equation}
    \widetilde{h}_{kd}(\beta_d) \coloneqq \exp\big(\mathbb{E}_{q(h_d|\beta_d)}[\log h_{kd}]\big),
\end{equation}
which can be computed explicitly using the digamma function, the variational lower bound \eqref{lda_elbo} reduces to
\begin{align}\label{lda_elbo_concrete}
    \begin{split}
        \mathcal{L}^{\text{LDA}}(X | W, \alpha, \beta, \phi) &= \sum_{v,k,d} x_{vd} \phi_{vkd} \log \frac{w_{vk}\widetilde{h}_{kd}}{\phi_{vkd}} + \sum_d \Big( \log\Gamma(\textstyle{\sum}_k \alpha_k) - \log\Gamma(\textstyle{\sum}_k \beta_{kd}) \Big) \\
        &\phantom{=} + \displaystyle\sum_{k,d} \Big( \log \Gamma(\beta_{kd}) - \log\Gamma(\alpha_k) + (\alpha_k - \beta_{kd}) \Eq{\log h_{kd}} \Big),
    \end{split}
\end{align}
cf.\ \cite[A.1, eq.\ (15), (16)]{blei2003latent}. The VI algorithm is a variational EM procedure. It alternates between optimizing the approximation of the posterior distribution of the latent variables for fixed model parameters (variational E-step) and optimizing the model parameters for a fixed variational distribution (variational M-step). That is, \eqref{lda_elbo_concrete} is optimized with respect to $(\beta, \phi)$ during the variational E-step and with respect to $(W, \alpha)$ during the variational M-step. Since there is no closed-form optimum for the parameter of the Dirichlet prior $\alpha$, we assume that it is fixed for simplicity. This procedure results in \Cref{alg:lda}.

\begin{algorithm}
    \caption{\cite[A.3, A.4]{blei2003latent} VI algorithm LDA}\label{alg:lda}
    \begin{algorithmic}
        \Require $X\in\R_{+}^{V\times D}, \, \alpha\in\R_{>0}^K$, initializations $W^{(0)}\in\Delta_{V-1}^K, \, \beta^{(0)}\in\R_{>0}^{K \times D}$
        \State $n \gets 0$
        \While {$\mathcal{L}^{\text{LDA}}(X | W, \alpha, \beta, \phi)$ not converged}
        \State $\widetilde{h}_{kd}^{(n)} = \exp(\Eq{\log h_{kd}})$
        \State $\phi_{vkd}^{(n)} \propto w_{vk}^{(n)} \widetilde{h}_{kd}^{(n)}$
        \State $w^{(n+1)}_{vk} \propto \sum_d x_{vd} \phi_{vkd}^{(n)}$
        \State $\beta^{(n+1)}_{kd} = \alpha_k + \sum_v x_{vd} \phi_{vkd}^{(n)}$
        \State $n \gets n + 1$
        \EndWhile
    \end{algorithmic}
\end{algorithm}
\section{Algorithms for NMF with normalization constraints}\label{section_algorithms}

We now consider the NMF optimization problem \eqref{optimization_problem_nmf} with additional normalization constraints on the topics or both the topics and the topic weights. We describe the connections between the original formulation and the sum-constrained optimization problems, and derive NMF algorithms with joint update rules. These new NMF algorithms are based on the observation that a constrained optimization of the joint auxiliary function \eqref{nmf_joint_auxiliary_function} is possible. 

More precisely, the two optimization problems we consider are NMF with normalized topics
\begin{align}\label{optimization_problem_nmf_constraint1}
    \min_{W\in\Delta_{V-1}^K, \, H\in\R_+^{K\times D}} D_{\text{KL}}(X || WH),
\end{align}
as well as NMF with both normalized topics and topic proportions
\begin{align}\label{optimization_problem_nmf_constraint2}
    \min_{W\in\Delta_{V-1}^K, \, H\in\Delta_{K-1}^D} D_{\text{KL}}(X || WH).
\end{align}
At first glance, it might seem unnatural to normalize both matrices $W$ and $H$ without adjusting the input matrix $X$. However, with both constraints, we have $\sum_{v} (WH)_{vd}=1$, and the objective function of \eqref{optimization_problem_nmf_constraint2} becomes
\begin{equation}
    -\sum_{v,d}x_{vd} \log(WH)_{vd} = - \sum_d \Big(\sum_{v'} x_{v'd}\Big) \sum_{v} \frac{x_{vd}}{\sum_{v'} x_{v'd}} \log(Wh_d)_v.
\end{equation}
The inner sum corresponds to the Kullback--Leibler divergence between the empirical distribution induced by the observed counts $(x_{vd})_v$ and the model distribution $(Wh_d)_v$ \cite[eq.\ (10)]{hofmann1999probabilistic}. In particular, optimization problem \eqref{optimization_problem_nmf_constraint2} and PLSA have the same objective function. Both models project the samples onto the simplex spanned by the topics based on the KL divergence. The following lemmas clarify the connections between all three NMF problems with different normalization constraints.

\begin{lemma}\label{lemma_connection_nmf_nmfc1}
    Let $(W, H)$ be a solution of the standard NMF optimization problem \eqref{optimization_problem_nmf} and let $\lambda_k = \sum_v w_{vk}$. Then $(\widetilde{W}, \widetilde{H})$ with $\widetilde{w}_{vk} = w_{vk} / \lambda_k$ and $\widetilde{h}_{kd} = \lambda_k h_{kd}$ is a solution of NMF with a normalization constraint on $W$ \eqref{optimization_problem_nmf_constraint1}. Conversely, every solution of \eqref{optimization_problem_nmf_constraint1} is a solution of \eqref{optimization_problem_nmf}.
\end{lemma}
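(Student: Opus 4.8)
The plan is to exploit the scaling invariance of the reconstruction that was already noted in the preliminaries, namely that $WH = WDD^{-1}H$ for any invertible diagonal $D\in\R_+^{K\times K}$. First I would observe that the proposed rescaling is precisely such a transformation with $D=\operatorname{diag}(\lambda_1,\dots,\lambda_K)$, so that the two reconstructions agree entrywise,
\[
    (\widetilde{W}\widetilde{H})_{vd} = \sum_k \frac{w_{vk}}{\lambda_k}\,\lambda_k h_{kd} = (WH)_{vd}.
\]
Since the divergence \eqref{equation_kl_divergence} depends on $(W,H)$ only through the product $WH$, this immediately yields $\KL(X\,||\,\widetilde{W}\widetilde{H}) = \KL(X\,||\,WH)$, so the rescaling preserves the objective value.

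Next I would verify that $(\widetilde{W},\widetilde{H})$ is feasible for \eqref{optimization_problem_nmf_constraint1}. Non-negativity of $\widetilde{H}$ is clear because $\lambda_k\geq 0$, and the columns of $\widetilde{W}$ are normalized by construction, $\sum_v \widetilde{w}_{vk} = \lambda_k^{-1}\sum_v w_{vk} = 1$, so $\widetilde{W}\in\Delta_{V-1}^K$. To upgrade feasibility to optimality I would use that the feasible set of \eqref{optimization_problem_nmf_constraint1} is contained in that of \eqref{optimization_problem_nmf}, as every column-normalized matrix is non-negative. Minimizing over the smaller set can only raise the optimum, so the constrained minimum is at least the unconstrained minimum; but $(\widetilde{W},\widetilde{H})$ is constrained-feasible and attains the unconstrained minimum value $\KL(X\,||\,WH)$, forcing the two minima to coincide and making $(\widetilde{W},\widetilde{H})$ a solution of \eqref{optimization_problem_nmf_constraint1}. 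The converse drops out of this same equality of optima: if $(W^*,H^*)$ solves \eqref{optimization_problem_nmf_constraint1}, its value equals the common minimum, and since it is also feasible for \eqref{optimization_problem_nmf}, it attains the unconstrained optimum and hence solves it too.

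The one genuine subtlety, and the step I would treat most carefully, is the degenerate case in which some $\lambda_k=0$, which makes the division $w_{vk}/\lambda_k$ undefined. A vanishing $\lambda_k$ forces the entire $k$-th column of $W$ to be zero, so that topic contributes nothing to the reconstruction. I would handle this by taking all columns of $W$ to be nonzero at the solution without loss of generality, since a zero column can be replaced by an arbitrary normalized vector whose corresponding row of $H$ is set to zero, leaving $WH$ and therefore the objective unchanged. Apart from this edge case, the entire argument is a direct consequence of scale invariance together with the nesting of the two feasible regions, and I expect no further obstacle.
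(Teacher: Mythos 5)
Your proof is correct and takes essentially the same route as the paper, which disposes of this lemma in a single remark by invoking the scaling invariance $WH = WDD^{-1}H$ for invertible non-negative diagonal $D$; your write-up merely makes explicit the feasibility check, the nesting of the feasible sets of \eqref{optimization_problem_nmf_constraint1} inside \eqref{optimization_problem_nmf}, and the resulting equality of the two optimal values. Your handling of the degenerate case $\lambda_k = 0$ is a welcome addition, since the paper's statement silently assumes every column of $W$ has a positive sum.
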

\begin{lemma}\label{lemma_connection_nmf_nmfc2}
    Let $(W, H)$ be a solution of NMF with a normalization constraint on $W$ \eqref{optimization_problem_nmf_constraint1} and let $\lambda_d = \sum_v x_{vd}$. Then $(W, \widetilde{H})$ with $\widetilde{h}_{kd} = h_{kd} / \lambda_d$ is a solution of NMF with a normalization constraint on both $W$ and $H$ \eqref{optimization_problem_nmf_constraint2}.
    Conversely, let $(W, H)$ be a solution of \eqref{optimization_problem_nmf_constraint2}. Then $(W, \widetilde{H})$ with $\widetilde{h}_{kd} = \lambda_d h_{kd}$ is a solution of \eqref{optimization_problem_nmf_constraint1}.
\end{lemma}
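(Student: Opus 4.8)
The plan is to reduce both directions to a single structural fact about solutions of \eqref{optimization_problem_nmf_constraint1}, namely that the topic weights of any such solution automatically carry the correct column mass, $\sum_k h_{kd} = \lambda_d$ for every document $d$. First I would establish this mass-matching identity from the first-order optimality conditions. Writing the gradient of the objective in \eqref{optimization_problem_nmf_constraint1} with respect to $h_{kd}$ as $\partial_{h_{kd}}\KL = \sum_v w_{vk}\big(1 - x_{vd}/(WH)_{vd}\big)$ and invoking the KKT complementary-slackness condition $h_{kd}\,\partial_{h_{kd}}\KL = 0$, summing over $k$ collapses the $(WH)_{vd}$ factors and yields $\sum_v (WH)_{vd} = \sum_v x_{vd} = \lambda_d$. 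Since $W\in\Delta_{V-1}^K$ has unit column sums, the left-hand side equals $\sum_k h_{kd}$, so $\sum_k h_{kd} = \lambda_d$. In particular the rescaled weights $\widetilde{h}_{kd} = h_{kd}/\lambda_d$ lie on the simplex, so $\widetilde{H}\in\Delta_{K-1}^D$ and $(W,\widetilde{H})$ is feasible for \eqref{optimization_problem_nmf_constraint2}.

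Second, I would quantify how the objective changes under the column rescaling $h_{kd}\mapsto h_{kd}/\lambda_d$. Working on the affine slice $S = \{(W,H): W\in\Delta_{V-1}^K,\ \sum_k h_{kd}=\lambda_d\ \forall d\}$, the total reconstruction mass is fixed: $\sum_{v,d}(WH)_{vd} = \sum_d\lambda_d$ on $S$, and $\sum_{v,d}(W\widetilde{H})_{vd} = D$ after rescaling. Substituting $(W\widetilde{H})_{vd} = (WH)_{vd}/\lambda_d$ into \eqref{equation_kl_divergence} and using that $\sum_{v}x_{vd}\log\lambda_d = \lambda_d\log\lambda_d$ is model-independent, the two values differ only by a constant, $\KL(X\,||\,WH) = \KL(X\,||\,W\widetilde{H}) + C$ with $C = \sum_d(\lambda_d - \lambda_d\log\lambda_d) - D$ independent of $(W,H)\in S$. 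The column rescaling is a bijection between $S$ and the feasible set of \eqref{optimization_problem_nmf_constraint2}, with inverse $\widetilde{h}_{kd}\mapsto\lambda_d\widetilde{h}_{kd}$, and it merely shifts the objective by the fixed constant $C$; hence it carries minimizers of $\KL$ over $S$ to minimizers over \eqref{optimization_problem_nmf_constraint2}, and conversely.

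Finally I would assemble the two claims. For the forward direction, the mass-matching identity places every solution of \eqref{optimization_problem_nmf_constraint1} inside $S$; such a point minimizes $\KL$ over the full feasible set of \eqref{optimization_problem_nmf_constraint1} and \emph{a fortiori} over the subset $S$, so by the bijection its image $(W,\widetilde{H})$ minimizes $\KL$ over the feasible set of \eqref{optimization_problem_nmf_constraint2}, which is the assertion. For the converse, I would first note that the minimum of \eqref{optimization_problem_nmf_constraint1} is attained on $S$ (again by the mass-matching step), so that minimum equals $\min_S \KL$; the inverse rescaling $\widetilde{h}_{kd}=\lambda_d h_{kd}$ then sends a solution of \eqref{optimization_problem_nmf_constraint2} to a point of $S$ attaining $\min_S \KL$, and this common value is the minimum of \eqref{optimization_problem_nmf_constraint1}, so $(W,\widetilde{H})$ solves \eqref{optimization_problem_nmf_constraint1}.

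The main obstacle is the mass-matching step: everything downstream is bookkeeping, but the fact that an optimal $H$ has \emph{exactly} column sums $\lambda_d$ is what makes the naive normalization $h_{kd}/\lambda_d$ land precisely on the simplex, and it is also what lets me identify the minimum over the full feasible set of \eqref{optimization_problem_nmf_constraint1} with the minimum over the slice $S$ in the converse. Some care is needed to run the stationarity argument at boundary solutions where some $h_{kd}=0$, but the complementary-slackness form $h_{kd}\,\partial_{h_{kd}}\KL = 0$ handles interior and boundary coordinates uniformly, so the identity survives.
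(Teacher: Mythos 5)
Your overall strategy matches the paper's: its proof also pivots on the mass-matching identity (cited there as the column-sum preservation property of KL-NMF solutions, \cite[Theorem 6.9]{gillis2020nonnegative}, applied after lifting the constrained solution to the unconstrained problem via \Cref{lemma_connection_nmf_nmfc1}), then checks feasibility of the rescaled pair and transfers optimality. Your KKT/complementary-slackness derivation of $\sum_k h_{kd} = \lambda_d$ is a correct, self-contained re-proof of the cited fact, and your forward direction---the solution lies in the slice $S$, minimizes over $S$ \emph{a fortiori}, and the constant-shift bijection carries it to a minimizer of \eqref{optimization_problem_nmf_constraint2}---is complete.

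The converse direction, however, has a genuine gap. You assert that ``the minimum of \eqref{optimization_problem_nmf_constraint1} is attained on $S$ (again by the mass-matching step),'' but mass-matching only says that \emph{if} a minimizer of \eqref{optimization_problem_nmf_constraint1} exists, then it lies in $S$; it says nothing about attainment. In the converse direction existence of such a minimizer is not a hypothesis (you are only handed a solution of \eqref{optimization_problem_nmf_constraint2}), and it is essentially what the converse asserts, so invoking it is circular: your argument shows $\KL(X\,||\,W\widetilde{H}) = \min_S \KL$ but never rules out a feasible point \emph{outside} $S$ with strictly smaller objective. Two standard patches: (i) prove attainment directly---on the feasible set of \eqref{optimization_problem_nmf_constraint1}, normalization of $W$ gives $\sum_{v,d}(WH)_{vd} = \sum_{k,d} h_{kd}$, so the objective is coercive in $H$ and lower semicontinuous, hence the minimum exists and mass-matching finishes the bridge; or (ii) avoid attainment altogether by noting that for any feasible $(W', H')$, rescaling each column of $H'$ by $t_d = \lambda_d/\mu_d$ with $\mu_d = \sum_k h'_{kd}$ lands in $S$ and cannot increase the objective, since $t \mapsto -\lambda_d \log t + t\,\mu_d$ is minimized at $t = \lambda_d/\mu_d$; this yields $\inf \KL$ over \eqref{optimization_problem_nmf_constraint1} $= \min_S \KL$, and is in effect the inequality that the paper's (omitted) ``standard proof by contradiction'' relies on. With either patch, your argument is complete.
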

\Cref{lemma_connection_nmf_nmfc1} follows directly from $WH = WDD^{-1}H$ for any invertible non-negative diagonal matrix $D$ and the proof of \Cref{lemma_connection_nmf_nmfc2} is deferred to \Cref{appendix:nmf_with_norm_constraints}. Taken together, these two results show that a solution of one of the three NMF problems \eqref{optimization_problem_nmf}, \eqref{optimization_problem_nmf_constraint1} or \eqref{optimization_problem_nmf_constraint2} can be readily converted into a solution of one of the other optimization problems. In fact, one can also see that an analogous result holds for the fixed points of their MU algorithms (see below).

Contrary to intuition, the additional normalization constraints simplify the NMF optimization problem. Recall that the joint auxiliary function \eqref{nmf_joint_auxiliary_function} of $\KL(X\,||\,WH)$ is given by
\begin{align*}
    G((W, H), (W^{(n)}, H^{(n)})) &= -\sum_{v,k,d} x_{vd} \phi_{vkd}^{(n)} \log \frac{w_{vk}h_{kd}}{\phi_{vkd}^{(n)}} + \sum_{v,d} (WH)_{vd}.
\end{align*}
Under the constraints $w_k\in\Delta_{V-1}$, the second summand reduces to $\sum_{k,d}h_{kd}$ and $G$ can be jointly optimized in all model parameters $(W, H)$. Using Lagrange multipliers, we obtain the joint multiplicative update equations given in \Cref{alg:mu_nmf_constraint1} (see \Cref{appendix:nmf_with_norm_constraints} for details). When additionally imposing $h_d\in\Delta_{K-1}$, the relevant summands of the auxiliary function are identical to the auxiliary function \eqref{plsa_joint_auxiliary_function} of PLSA. Accordingly, the canonical multiplicative algorithm for \eqref{optimization_problem_nmf_constraint2} is the PLSA algorithm, cf.\ \Cref{alg:mu_nmf_constraint2,alg:plsa}. The two models are equivalent both in the sense that their optimization problems are the same and in the sense that their MU algorithms coincide.

Although it is also possible to derive update rules that alternate between updating $W$ and $H$ using the induced auxiliary functions \eqref{auxiliary_function_separate}, the joint update equations have one major advantage. They do not require $WH$ to be recomputed after updating one of the two matrices, which reduces the number of matrix multiplication per iteration from four to three. From a theoretical perspective, the joint updates turn NMF from a generalized EM algorithm into an EM algorithm. Another interesting observation is that the generative process \eqref{gen_process_nmf} of NMF does not explicitly model the latent topic assignments of individual words in documents. However, the correspondence to PLSA and its generative process shows that $\phi_{vkd}$ is in fact the posterior probability that a word $v$ in document $d$ has been generated by topic $k$. For given data $X$, the two generative processes of NMF with normalization constraints and PLSA are equivalent and they reveal different perspectives on the factorization. On the one hand, the Poisson noise model of NMF is more concise and motivates to not store $\phi_{vkd}$ explicitly during model training. On the other hand, the atomic generative process of PLSA adds a layer of interpretability to the model parameters. More details on the connections between various generative models are given in \Cref{appendix:equivalent_gen_models}, and we also refer the interested reader to \cite{buntine2005discrete}.

\begin{algorithm}
    \caption{Joint multiplicative updates for NMF \eqref{optimization_problem_nmf_constraint1}}\label{alg:mu_nmf_constraint1}
    \begin{algorithmic}
        \Require $X\in\R_{+}^{V\times D}$, initializations $W^{(0)}\in\Delta_{V-1}^K, \, H^{(0)}\in\R_{+}^{K\times D}$
        \State $n \gets 0$
        \While {$\KL(X \, || \, W^{(n)}H^{(n)})$ not converged}
        \State $w^{(n+1)}_{vk} \propto w^{(n)}_{vk} \sum_d x_{vd} \frac{ h^{(n)}_{kd} }{ (W^{(n)}H^{(n)})_{vd} }$
        \State $h^{(n+1)}_{kd} = h^{(n)}_{kd} \sum_v x_{vd} \frac{ w^{(n)}_{vk} }{ (W^{(n)}H^{(n)})_{vd} }$
        \State $n \gets n + 1$
        \EndWhile
    \end{algorithmic}
\end{algorithm}
\section{Connection between NMF and LDA}\label{section_connections_lda}

The previous section revealed the equivalence of NMF with additional normalization constraints and PLSA. Since the only difference between PLSA and LDA is the Dirichlet prior on the topic proportions, it is natural to wonder whether NMF with normalization constraints and a Dirichlet prior on the columns of $H$ is equivalent to LDA. This turns out to be the case.

With the normalization constraints $w_k \in\Delta_{V-1}$ on the topic matrix $W$, the following Dirichlet--Poisson generative model yields, up to a data-dependent constant, the same log-likelihood and variational lower bound as LDA:
\begin{enumerate}
    \item Sample the topic proportions $h_d \sim \Dirichlet(\alpha)$.
    \item Sample the topic contributions $z_{vkd} \sim \Poisson(w_{vk}h_{kd})$.
    \item $X_{vd} = \sum_k z_{vkd}$.
\end{enumerate}
Denoting the slice $z_{::d}$ of the latent tensor $z$ as $z_d$ and the fiber $z_{v:d}$ as $z_{vd}$, we apply a mean-field variational inference approach and approximate the true posterior of the latent variables with a variational distribution given by
\begin{align}
    q(h, z | \beta, \phi) = \prod_d q_d(h_d, z_d | \beta_d, \phi_d), \qquad q_d(h_d, z_d | \beta_d, \phi_d) = q(h_d | \beta_d) \prod_v q(z_{vd} | x_{vd}, \phi_{vd}),
\end{align}
where $\beta_d$ and $\phi_{vd}$ are Dirichlet and multinomial parameters, respectively. With the joint likelihood of the observed and latent variables of a single document being equal to
\begin{equation}
    p(x_d, h_d, z_d | W, \alpha) = \Big(\prod_v \mathbbm{1}_{\sum_k z_{vkd} = x_{vd}}\Big) p(h_d | \alpha) \prod_{v,k} p(z_{vkd} | W, h_d),
\end{equation}
the relevant terms of the variational lower bound of the Dirichlet--Poisson model with normalized topic matrix $W$ are identical to the variational lower bound \eqref{lda_elbo_concrete} of LDA, and the VI algorithms are identical too (see \Cref{appendix:dirichlet_poisson}). Similar to the connection between NMF with normalization constraints and PLSA, the two different generative processes reveal different perspectives on the factorization. On the one hand, the Dirichlet--Poisson model of NMF is more concice and motivates to not store the variational parameters $\phi$ explicitly during model training---a computational trick that is already used in efficient LDA implementations \cite{hoffman2010online,vrehuuvrek2010software}, cf.\ \Cref{alg:dpnmf,alg:lda}. 
On the other hand, the multinomial generative process of LDA adds a layer of interpretability to $\phi$.

\begin{algorithm}[H]
    \caption{VI algorithm Dirichlet--Poisson NMF / LDA}\label{alg:dpnmf}
    \begin{algorithmic}
        \Require $X\in\R_{+}^{V\times D}, \, \alpha\in\R_{>0}^K$, initializations $W^{(0)}\in\Delta_{V-1}^K, \, \beta^{(0)}\in\R_{>0}^{K \times D}$
        \State $n \gets 0$
        \While {$\mathcal{L}^{\text{LDA}}(X | W, \alpha, \beta, \phi)$ not converged}
        \State $\widetilde{h}_{kd}^{(n)} = \exp(\Eq{\log h_{kd}})$
        \State $w^{(n+1)}_{vk} \propto w^{(n)}_{vk} \sum_d x_{vd} \frac{\widetilde{h}_{kd}^{(n)}}{(W^{(n)}\widetilde{H}^{(n)})_{vd}}$
        \State $\beta^{(n+1)}_{kd} = \alpha_k + \widetilde{h}_{kd}^{(n)} \sum_v x_{vd} \frac{w_{vk}^{(n)}}{(W^{(n)}\widetilde{H}^{(n)})_{vd}}$
        \State $n \gets n + 1$
        \EndWhile
    \end{algorithmic}
\end{algorithm}

\section{Sparse NMF}\label{section_applications}

In sparse NMF, a penalty on $H$ is added to decrease the number of topics used in the reconstruction of each document. This is especially useful when the number of topics is high, as it can prevent overfitting and improve the interpretability of $W$ \cite{le2015sparse}. For a hyperparameter $\lambda > 0$, the objective function of $\ell_1$-penalized NMF is given by
\begin{equation}
    \KL(X \, || \, WH) + \lambda \|H\|_1.
\end{equation}
An important technical difficulty of sparse NMF is that the scaling ambiguity of the matrix product $WH$ in combination with a penalty on $H$ would cause $H$ to converge to zero and $W$ to blow up unless further constraints are imposed. For this reason, the columns of $W$ are usually required to be normalized, and the optimization problem becomes
\begin{align}\label{optimization_problem_nmf_sparse_l1}
    \min_{W\in\Delta_{V-1}^K, \, H\in\R_{+}^{K\times D}} \KL(X \, || \, WH) + \lambda \|H\|_1.
\end{align}
With the $\ell_1$ normalization constraints on the columns of $W$ in place, the results from \Cref{section_algorithms} can be readily applied to obtain joint update rules to solve \eqref{optimization_problem_nmf_sparse_l1}. Similar to before, the objective function value can be shown to be non-increasing under the following joint updates:
\begin{align}
    w^{(n+1)}_{vk} \propto w^{(n)}_{vk} \sum_d x_{vd} \frac{ h^{(n)}_{kd} }{ (W^{(n)}H^{(n)})_{vd} }, \qquad h^{(n+1)}_{kd} = \frac{1}{1+\lambda} h^{(n)}_{kd} \sum_v x_{vd} \frac{ w^{(n)}_{vk} }{ (W^{(n)}H^{(n)})_{vd} }.
\end{align}
The only difference between $\ell_1$-penalized NMF and NMF without a penalty \eqref{optimization_problem_nmf_constraint1} is that the topic weights $H$ are scaled by a factor of $\frac{1}{1+\lambda}$, cf.\ \Cref{alg:mu_nmf_constraint1}. Surprisingly, however, this holds true not only for the first, but for \textit{every} iteration of the algorithm. Our observation is not a consequence of the algorithmic approach, but a property of the optimization problem itself:

\begin{lemma}\label{lemma_lasso_insufficient}
    Let $(W, H)$ be a solution of NMF with an $\ell_1$ normalization constraint on $W$ \eqref{optimization_problem_nmf_constraint1} and let $\lambda > 0$. Then $(W, \tfrac{1}{1+\lambda}H)$ is a solution of Lasso-penalized NMF with an $\ell_1$ normalization constraint \eqref{optimization_problem_nmf_sparse_l1}. Conversely, let $(W, H)$ be a solution of \eqref{optimization_problem_nmf_sparse_l1}. Then $(W, (1+\lambda)H)$ is a solution of \eqref{optimization_problem_nmf_constraint1}.
\end{lemma}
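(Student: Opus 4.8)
The plan is to avoid the update rules entirely and instead compare the two objective functions directly on their common feasible set $\Delta_{V-1}^K \times \R_+^{K\times D}$, exploiting the normalization of $W$ to absorb the Lasso penalty into a rescaling of $H$. The first step is to observe that whenever $w_k\in\Delta_{V-1}$ for every $k$, the total reconstruction mass equals the $\ell_1$ norm of $H$:
\begin{equation}
    \sum_{v,d}(WH)_{vd} = \sum_{k,d} h_{kd}\sum_v w_{vk} = \sum_{k,d} h_{kd} = \|H\|_1.
\end{equation}
Substituting this into \eqref{equation_kl_divergence} and setting aside the terms $\sum_{v,d}(x_{vd}\log x_{vd} - x_{vd})$ that depend only on $X$, both objectives reduce to a single log-likelihood term plus a multiple of $\|H\|_1$: the unpenalized objective of \eqref{optimization_problem_nmf_constraint1} carries coefficient $1$ in front of $\|H\|_1$, whereas the penalized objective of \eqref{optimization_problem_nmf_sparse_l1} carries coefficient $1+\lambda$.

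The second step is to test the scaling map $\Phi\colon (W,H)\mapsto (W,\tfrac{1}{1+\lambda}H)$, which is a bijection of the shared feasible set because multiplication by the positive constant $\tfrac{1}{1+\lambda}$ is a bijection of $\R_+^{K\times D}$ and leaves $W$ untouched. Writing $f$ for the unpenalized objective and $g$ for the penalized one, a direct computation using $\log\bigl(W\tfrac{1}{1+\lambda}H\bigr)_{vd}=\log(WH)_{vd}-\log(1+\lambda)$ in the log-likelihood term and $\|\tfrac{1}{1+\lambda}H\|_1 = \tfrac{1}{1+\lambda}\|H\|_1$ in the penalty term should give
\begin{equation}
    g\!\left(W,\tfrac{1}{1+\lambda}H\right) = f(W,H) + \log(1+\lambda)\sum_{v,d} x_{vd},
\end{equation}
where the additive term is constant across the feasible set.

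The conclusion then follows purely formally. Since $\Phi$ is a bijection of the feasible set that shifts $f$ to $g$ by a constant, it maps global minimizers of $f$ to global minimizers of $g$ and conversely. Applying this to a solution $(W,H)$ of \eqref{optimization_problem_nmf_constraint1} shows that $(W,\tfrac{1}{1+\lambda}H)$ solves \eqref{optimization_problem_nmf_sparse_l1}, and applying $\Phi^{-1}$ to a solution of \eqref{optimization_problem_nmf_sparse_l1} gives the converse claim.

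There is no genuine analytic obstacle, and this is precisely the point of the lemma: the content is that the Lasso penalty induces nothing beyond a uniform rescaling. The steps requiring care are bookkeeping ones. The reformulation is valid \emph{only} because $W$ is constrained to the simplex; without this constraint $\sum_{v,d}(WH)_{vd}\neq\|H\|_1$ and the penalty would genuinely alter the optimizers. One should also note that $\Phi$ preserves the support of $WH$, so the entries with $x_{vd}>0$ but $(WH)_{vd}=0$ send both objectives to $+\infty$ in a matching way and the bijection remains well defined on the effective domain where the objectives are finite.
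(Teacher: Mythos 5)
Your proposal is correct and takes essentially the same route as the paper's own proof: both hinge on the identity $\KLlambda(X \, || \, W\tfrac{1}{1+\lambda}H) = \KL(X \, || \, WH) + \log(1+\lambda)\sum_{v,d} x_{vd}$, which holds precisely because the column-normalization of $W$ forces $\sum_{v,d}(WH)_{vd} = \|H\|_1$. The only cosmetic difference is the final packaging --- the paper spells this out as an explicit proof by contradiction, whereas you invoke the equivalent principle that a feasible-set bijection shifting one objective to the other by an additive constant maps global minimizers to global minimizers.
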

A proof by contradiction is provided in \Cref{appendix:sparse_nmf}. The Lasso penalty being insufficient to induce sparse topic weights $H$ has, to the best of our knowledge, not been reported in the literature \cite{marmin2023majorization}. Interestingly, however, replacing the $\ell_1$ normalization constraints on the columns of $W$ with $\ell_2$ normalization constraints and using the heuristic gradient-based algorithm from \cite{le2015sparse} leads to sparsity \cite[Section \rom{4}]{leplat2020blind}. In fact, any normalization constraints on the columns of $W$ and penalty terms on $H$ can be proven to act as competing regularizers on the two matrices (see \Cref{appendix:sparse_nmf}). Together with \Cref{lemma_lasso_insufficient}, this demonstrates that the effect on the sparsity of the model parameters not only depends on the penalty on $H$, but also on the choice of the normalization constraint on $W$ and the reconstruction error.
\section{Conclusion}\label{section:conclusion}

This paper has examined NMF with normalization constraints on the columns of one or both matrices of the decomposition. By taking these constraints into account explicitly during the majorization-minimization scheme, it became possible to derive joint update rules that speed up model training significantly. The joint update rules further bridge the gap between NMF optimization problems and algorithms from the topic modeling literature such as PLSA and LDA. Leveraging these connections, we have demonstrated that different latent structures of equivalent generative models can highlight complementary aspects of the same model.

The existing theoretical and applied research on other aspects and extensions of NMF or LDA is beyond the scope of our work. However, we anticipate that the outlined connections are not limited to the classical full-batch variational inference algorithms of these base models. Regarding NMF, directly resolving the scaling ambiguity in the case of the Kullback--Leibler divergence allowed us to re-examine the Lasso penalty for sparse NMF and might also be helpful in answering questions on the convergence of the model parameters \cite{badeau2010stability}. An interesting but ambitious direction would be to extend these ideas to more general loss functions.

\section*{Acknowledgments}
This work was funded by the National Institutes of Health (grant number R01CA269805). All authors declare no conflict of interest.

We would like to express our gratitude to Marinka Zitnik for her advice throughout the project. Moreover, we thank Christian Fiedler, Teng Gao, Allen Lynch, and Dominik Glodzik for their helpful feedback on the manuscript.

{
\small

}

\newpage
\appendix

\section{Auxiliary functions for NMF}\label{appendix:auxiliary_functions}
We give a brief overview of the fundamental idea of majorization-minimization algorithms and the construction of auxiliary functions for NMF with the KL divergence. This section is based on \cite{sun2016majorization,lee2000algorithms,gillis2020nonnegative}.

Consider the optimization problem
\begin{equation*}
    \min_{Y \in \mathcal{Y}} F(Y)
\end{equation*}
for $\mathcal{Y} \subseteq \R^M$ and a function $F\colon \mathcal{Y} \rightarrow \R.$ Within the context of the majorization-minimization framework, the strategy to minimize $F$ consists of two-steps. First, an auxiliary function $G\colon \mathcal{Y} \times \mathcal{Y} \rightarrow \R$ with
\begin{align*}
    G(Y, Y') \geq F(Y), \\
    G(Y', Y') = F(Y')
\end{align*}
for all $Y, Y' \in \mathcal{Y}$ is constructed. The majorizer $G$ of $F$ is then minimized in every iteration, that is,
\begin{equation*}
    Y^{(n+1)} = \argmin_{Y \in\mathcal{Y}} G(Y, Y^{(n)}).
\end{equation*}
Indeed, this procedure yields a non-increasing objective function value due to
\begin{equation*}
    F(Y^{(n+1)}) \leq G(Y^{(n+1)}, Y^{(n)}) \leq G(Y^{(n)}, Y^{(n)}) = F(Y^{(n)}).
\end{equation*}
In the case of NMF with the KL divergence, the goal is to decompose $X\in\R_+^{V\times D}$ into the product of two non-negative matrices $W\in\R_+^{V\times K}$ and $H\in\R_+^{K\times D}$ by minimizing the reconstruction error
\begin{equation*}
    \KL(X\, || \, WH) = \sum_{v,d} x_{vd}\log\frac{x_{vd}}{(WH)_{vd}}- x_{vd} + (WH)_{vd}.
\end{equation*}
Note that it is equivalent to minimize
\begin{equation*}
    F(W, H) = - \sum_{v,d} x_{vd}\log(WH)_{vd} + (WH)_{vd}.
\end{equation*}
For simplicity, we will therefore often refer to $\KL$ and $F$ interchangeably. An auxiliary function of $\KL$ can now be constructed using the following two well-known results.

\begin{lemma}[{\cite[Lemma 3]{lee2000algorithms}}]\label{lemma:aux_function_summand}
    Consider the function $F(W, H) = - \log(WH)_{vd}$ for two indices $v,d$. Then the function $G$ defined by
    \begin{align*}
        \phi'_{vkd} &\coloneqq \frac{w'_{vk} h'_{kd}}{(W'H')_{vd}}, \\
        G((W, H), (W', H')) &= -\sum_{k} \phi'_{vkd} \log \frac{w_{vk}h_{kd}}{\phi'_{vkd}}
    \end{align*}
    is an auxiliary function of $F.$
\end{lemma}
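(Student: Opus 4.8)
The plan is to verify directly the two defining properties of an auxiliary function for the single-summand objective $F(W,H) = -\log(WH)_{vd}$: the tightness condition $G((W',H'),(W',H')) = F(W',H')$ and the majorization $G((W,H),(W',H')) \geq F(W,H)$. The crucial preliminary observation, which I would establish first, is that for fixed indices $v,d$ the coefficients $\{\phi'_{vkd}\}_k$ form a probability distribution over the index $k$: each is nonnegative, and summing over $k$ gives $\sum_k \phi'_{vkd} = \sum_k \tfrac{w'_{vk}h'_{kd}}{(W'H')_{vd}} = \tfrac{(W'H')_{vd}}{(W'H')_{vd}} = 1$, where we assume $(W'H')_{vd} > 0$ so that $\phi'_{vkd}$ is well defined. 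This normalization is the structural fact that drives the entire construction.

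For the tightness condition, I would substitute $(W,H) = (W',H')$ into $G$. By the definition of $\phi'_{vkd}$ we have $\tfrac{w'_{vk}h'_{kd}}{\phi'_{vkd}} = (W'H')_{vd}$, so the argument of the logarithm no longer depends on $k$ and the logarithm factors out of the sum. The remaining coefficient $\sum_k \phi'_{vkd}$ equals $1$ by the preliminary observation, leaving exactly $-\log(W'H')_{vd} = F(W',H')$, as required.

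For the majorization, the key step is to exhibit $(WH)_{vd}$ as a convex combination with weights $\phi'_{vkd}$ and then apply Jensen's inequality to the convex function $-\log$. Concretely, writing $t_k \coloneqq \tfrac{w_{vk}h_{kd}}{\phi'_{vkd}}$, we have $(WH)_{vd} = \sum_k w_{vk}h_{kd} = \sum_k \phi'_{vkd}\, t_k$, so $F(W,H)$ is the negative logarithm of an average of the $t_k$ under the distribution $\{\phi'_{vkd}\}_k$. Convexity of $-\log$ then yields $-\log\big(\sum_k \phi'_{vkd}\, t_k\big) \leq \sum_k \phi'_{vkd}\,(-\log t_k)$, and the right-hand side is precisely $G((W,H),(W',H'))$. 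This establishes $F(W,H) \le G((W,H),(W',H'))$.

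I do not expect a substantial obstacle, since the argument is the classical Lee--Seung construction; the only subtlety is bookkeeping around degenerate entries. One adopts the convention $0\log 0 = 0$ and notes that whenever $\phi'_{vkd} = 0$ (equivalently $w'_{vk}h'_{kd} = 0$) the corresponding term vanishes from both the sum defining $G$ and the convex combination, so Jensen's inequality is applied over the support of $\{\phi'_{vkd}\}_k$ without difficulty. The degenerate case $(W'H')_{vd} = 0$ is excluded because the coefficient $\phi'_{vkd}$ is then undefined, and in the full objective such a term carries weight $x_{vd}=0$ and may be dropped.
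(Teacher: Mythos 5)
Your proposal is correct and follows essentially the same route as the paper's proof: both verify tightness by noting that $\sum_k \phi'_{vkd} = 1$ makes $G$ collapse to $-\log(W'H')_{vd}$, and both obtain the majorization by writing $(WH)_{vd} = \sum_k \phi'_{vkd}\,\tfrac{w_{vk}h_{kd}}{\phi'_{vkd}}$ and applying Jensen's inequality to the logarithm with weights $\phi'_{vkd}$. Your write-up is simply more explicit, spelling out the tightness computation (which the paper calls easy to verify) and the $0\log 0$ and $(W'H')_{vd}=0$ edge cases.
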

\begin{proof}
    The equality $G((W',H'), (W',H')) = F(W', H')$ is easy to verify and Jensen's inequality implies
    \begin{equation*}
        \log(WH)_{vd} = \log \sum_k \phi'_{vkd} \frac{w_{vk}h_{kd}}{\phi'_{vkd}} \geq \sum_k \phi'_{vkd} \log \frac{w_{vk}h_{kd}}{\phi'_{vkd}}.
    \end{equation*}
\end{proof}

\begin{lemma}\label{lemma:aux_function_combine}
    Consider finitely many function $F_i \colon \mathcal{Y} \rightarrow \R$ with auxiliary functions $G_i,$ and let $c_i\in\R_{+}$. Then $G=\sum_i c_i G_i$ is an auxiliary function of $F=\sum_i c_i F_i.$
\end{lemma}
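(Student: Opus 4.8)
The plan is to verify directly that $G=\sum_i c_i G_i$ satisfies the two defining properties of an auxiliary function for $F=\sum_i c_i F_i$, namely the majorization inequality $G(Y, Y') \geq F(Y)$ for all $Y, Y'\in\mathcal{Y}$ and the tightness equality $G(Y', Y') = F(Y')$ for all $Y'\in\mathcal{Y}$. Both properties follow from the corresponding properties of each $G_i$ by linearity, so the argument is entirely elementary and reduces to two short computations.

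First I would establish the inequality. By assumption each $G_i$ is an auxiliary function of $F_i$, so $G_i(Y, Y') \geq F_i(Y)$ holds for every pair $(Y, Y')$. Since each coefficient $c_i$ is non-negative, multiplying preserves the direction of the inequality, giving $c_i G_i(Y, Y') \geq c_i F_i(Y)$. Summing over the finitely many indices $i$ then yields
\begin{equation*}
    G(Y, Y') = \sum_i c_i G_i(Y, Y') \geq \sum_i c_i F_i(Y) = F(Y).
\end{equation*}
This is the only place where the hypothesis $c_i \in \R_+$ is used, and it is the crux of the lemma: were some $c_i$ negative, the corresponding term would flip the inequality and the sum would no longer majorize $F$. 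I would flag this non-negativity requirement explicitly, as it is the single substantive (if modest) point of the statement.

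Next I would verify the equality at the diagonal. Here no sign condition is needed: each $G_i(Y', Y') = F_i(Y')$ by hypothesis, so taking the same non-negative combination gives $G(Y', Y') = \sum_i c_i G_i(Y', Y') = \sum_i c_i F_i(Y') = F(Y')$. Together with the inequality above, this shows $G$ is an auxiliary function of $F$ and completes the proof.

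There is no real obstacle to anticipate here; the result is a structural bookkeeping lemma whose entire content is that the class of auxiliary functions is closed under non-negative linear combinations. Its purpose in the paper is to combine the per-entry majorizers from \Cref{lemma:aux_function_summand} (weighted by the counts $x_{vd}$) into a global auxiliary function for the full KL objective, so the only thing worth emphasizing in the write-up is the role of the non-negativity of the weights, since the counts $x_{vd}$ that play the role of the $c_i$ are indeed non-negative.
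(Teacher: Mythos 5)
Your proof is correct and matches the paper's approach: the paper simply states that the lemma ``follows from the definition of auxiliary functions,'' and your write-up is exactly that argument spelled out, with the appropriate emphasis on where the non-negativity of the $c_i$ is needed.
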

\begin{proof}
    Follows from the definition of auxiliary functions.
\end{proof}

\begin{corollary}[{\cite[Lemma 3]{lee2000algorithms}}]\label{corollary:aux_function_nmf}
    Consider the function
    \begin{equation*}
        F(W, H) = - \sum_{v,d} x_{vd}\log(WH)_{vd} + (WH)_{vd}.
    \end{equation*}
    Then the function $G$ defined by
    \begin{align*}
        \phi'_{vkd} &= \frac{w'_{vk} h'_{kd}}{(W'H')_{vd}}, \\
        G((W, H), (W', H')) &= - \sum_{v,k,d} x_{vd} \phi'_{vkd} \log \frac{w_{vk}h_{kd}}{\phi'_{vkd}} + \sum_{v,d} (WH)_{vd}
    \end{align*}
    is an auxiliary function of $F$.
\end{corollary}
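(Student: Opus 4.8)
The plan is to prove the corollary purely by decomposing $F$ into a non-negatively weighted finite sum of elementary functions, each of which already has a known auxiliary function, and then invoking \Cref{lemma:aux_function_combine}. First I would write
\[
    F(W, H) = \sum_{v,d} x_{vd}\,\big(-\log(WH)_{vd}\big) + \sum_{v,d} (WH)_{vd},
\]
identifying two families of summands: the logarithmic terms $F^{\log}_{vd}(W,H) = -\log(WH)_{vd}$ weighted by the coefficients $x_{vd}$, and the linear terms $F^{\mathrm{lin}}_{vd}(W,H) = (WH)_{vd}$ weighted by $1$. Crucially, since $X \in \R_+^{V\times D}$, all of the coefficients $x_{vd} \ge 0$, so this is a non-negative combination.

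For each logarithmic summand, \Cref{lemma:aux_function_summand} directly supplies the auxiliary function
\[
    G^{\log}_{vd}\big((W,H),(W',H')\big) = -\sum_k \phi'_{vkd}\log\frac{w_{vk}h_{kd}}{\phi'_{vkd}},
\]
with $\phi'_{vkd}$ as defined in the corollary. For the linear summands I would observe that any function is trivially its own auxiliary function: setting $G^{\mathrm{lin}}_{vd}\big((W,H),(W',H')\big) \coloneqq (WH)_{vd}$ — a majorizer that simply ignores its second argument — satisfies both defining inequalities with equality, hence is an auxiliary function of $F^{\mathrm{lin}}_{vd}$. This is why the reconstruction term is carried through to $G$ unchanged, needing no majorization at all.

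It then remains only to combine these. Applying \Cref{lemma:aux_function_combine} with the non-negative coefficients $c_{vd} = x_{vd}$ for the logarithmic family and $c_{vd} = 1$ for the linear family (there are finitely many summands, as $V$ and $D$ are finite), the sum
\[
    G = \sum_{v,d} x_{vd}\, G^{\log}_{vd} + \sum_{v,d} G^{\mathrm{lin}}_{vd} = -\sum_{v,k,d} x_{vd} \phi'_{vkd} \log \frac{w_{vk}h_{kd}}{\phi'_{vkd}} + \sum_{v,d}(WH)_{vd}
\]
is an auxiliary function of $F$, which is precisely the claimed $G$. I do not expect a genuine obstacle here: the entire content is the bookkeeping observation that the hypothesis of \Cref{lemma:aux_function_combine} — non-negativity of the coefficients — is guaranteed exactly by $x_{vd} \ge 0$, and that the linear term contributes a trivial auxiliary function. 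In particular, no second application of Jensen's inequality is required beyond the one already encapsulated in \Cref{lemma:aux_function_summand}.
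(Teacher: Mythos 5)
Your proposal is correct and follows exactly the paper's own (one-line) proof, which simply applies \Cref{lemma:aux_function_summand} and \Cref{lemma:aux_function_combine}; you have merely spelled out the details that proof leaves implicit, namely that the linear terms $(WH)_{vd}$ are their own trivial auxiliary functions and that the coefficients $x_{vd}\geq 0$ satisfy the non-negativity hypothesis of \Cref{lemma:aux_function_combine}.
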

\begin{proof}
    Apply \Cref{lemma:aux_function_summand} and \Cref{lemma:aux_function_combine}.
\end{proof}

\section{NMF with normalization constraints}\label{appendix:nmf_with_norm_constraints}

In this section, we first proof \Cref{appendix:lemma_connection_nmf_nmfc2}, which states that global optima of NMF with a normalization constraint on $W$ \eqref{optimization_problem_nmf_constraint1} and NMF with a normalization constraint on both $W$ and $H$ \eqref{optimization_problem_nmf_constraint2} are identical up to a scaling of their topic weights by the total word counts $\sum_v x_{vd}$. Second, we provide details to the derivation of the joint update rules of the two NMF problems with normalization constraints. Lastly, we show that the connection from \Cref{appendix:lemma_connection_nmf_nmfc2} extends to iterates and fixed points of their MU algorithms.

\begin{customlemma}{4.2}\label{appendix:lemma_connection_nmf_nmfc2}
    Let $(W, H)$ be a solution of NMF with a normalization constraint on $W$ \eqref{optimization_problem_nmf_constraint1} and let $\lambda_d = \sum_v x_{vd}$. Then $(W, \widetilde{H})$ with $\widetilde{h}_{kd} = h_{kd} / \lambda_d$ is a solution of NMF with a normalization constraint on both $W$ and $H$ \eqref{optimization_problem_nmf_constraint2}.
    Conversely, let $(W, H)$ be a solution of \eqref{optimization_problem_nmf_constraint2}. Then $(W, \widetilde{H})$ with $\widetilde{h}_{kd} = \lambda_d h_{kd}$ is a solution of \eqref{optimization_problem_nmf_constraint1}.
\end{customlemma}
\begin{proof}
    The main ingredient of the proof is that solutions of NMF with the KL divergence \eqref{optimization_problem_nmf} preserve the column sums of $X$. That is, for any solution $(W, H)$ of \eqref{optimization_problem_nmf}, we have
    \begin{equation}\label{sums_preserved}
        \sum_v (WH)_{vd} = \sum_v x_{vd}
    \end{equation}
    \cite[Theorem 6.9]{gillis2020nonnegative}. Now, let $(W, H)$ be a solution of NMF with an additional normalization constraint on $W$. Due to \Cref{lemma_connection_nmf_nmfc1}, it is also a solution of the standard NMF problem \eqref{optimization_problem_nmf}. From \eqref{sums_preserved} and $w_k \in\Delta_{V-1}$, it follows that $\sum_k h_{kd} = \sum_v x_{vd} = \lambda_d.$ Hence, the candidate solution $(W, \widetilde{H})$ defined via $\widetilde{h}_{kd} = h_{kd} / \lambda_d$ indeed fulfills the additional normalization constraint, and we also have
    \begin{equation*}
        \KL(X \, || \, WH) = \sum_{v,d} x_{vd}\log\frac{x_{vd}}{(WH)_{vd}}.
    \end{equation*}
    The result can now be shown using a standard proof by contraction.
\end{proof}

In the following, we provide details to the constrained joint optimization of the joint auxiliary function $G$ of $\KL(X \, || \, WH)$. Recall from \eqref{nmf_joint_auxiliary_function} that $G$ is given by
\begin{align*}
    \phi_{vkd}^{(n)} &= \frac{w_{vk}^{(n)} h_{kd}^{(n)}}{(W^{(n)}H^{(n)})_{vd}}, \\
    G((W, H), (W^{(n)}, H^{(n)})) &= -\sum_{v,k,d} x_{vd} \phi_{vkd}^{(n)} \log \frac{w_{vk}h_{kd}}{\phi_{vkd}^{(n)}} + w_{vk}h_{kd},
\end{align*}
so the derivatives of $G$ are
\begin{align*}
    \partial_{w_{vk}} G((W, H), (W^{(n)}, H^{(n)})) &= -\frac{1}{w_{vk}} \sum_d x_{vd} \phi_{vkd}^{(n)} + \sum_d h_{kd}, \\
    \partial_{h_{kd}} G((W, H), (W^{(n)}, H^{(n)})) &= -\frac{1}{h_{kd}} \sum_v x_{vd} \phi_{vkd}^{(n)} + \sum_v w_{vk}.
\end{align*}
With the additional normalization constraints $w_k \in \Delta_{V-1}$, the method of Lagrange multipliers implies that there exist $\lambda_k\in\R$ with
\begin{equation*}
    \frac{1}{w_{vk}} \sum_d x_{vd} \phi_{vkd}^{(n)} - \sum_d h_{kd} = \lambda_k,
\end{equation*}
which yields
\begin{equation}
    w_{vk} \propto \sum_d x_{vd} \phi_{vkd}^{(n)}.
\end{equation}
Setting the derivative of $G$ with respect to $h_{kd}$ equal to zero and again utilizing $w_k \in \Delta_{V-1}$, we immediately obtain
\begin{equation}\label{eq:update_rule_topic_weights}
    h_{kd} = \sum_v x_{vd} \phi_{vkd}^{(n)}.
\end{equation}
If we additionally require the columns of $H$ to be normalized, i.e., $h_d \in\Delta_{K-1}$, the method of Lagrange multipliers implies that there exist $\lambda_d\in\R$ with
\begin{equation*}
    \frac{1}{h_{kd}} \sum_v x_{vd} \phi_{vkd}^{(n)} - 1 = \lambda_d,
\end{equation*}
which yields
\begin{equation}\label{eq:update_rule_normalized_topic_weights}
    h_{kd} \propto \sum_v x_{vd} \phi_{vkd}^{(n)}.
\end{equation}

Notice that the normalization in \eqref{eq:update_rule_normalized_topic_weights} just scales \eqref{eq:update_rule_topic_weights} by the total word counts $\sum_v x_{vd}$. More precisely, denoting the $n$-th iterate of the NMF with a normalization constraint on $W$ \Cref{alg:mu_nmf_constraint1} by $(W^{(n),\text{\normalfont NMF}}, H^{(n),\text{\normalfont NMF}})$ and the $n$-th iterate of NMF with a normalization constraint on both $W$ and $H$ / PLSA \Cref{alg:mu_nmf_constraint2} by $(W^{(n),\text{\normalfont PLSA}}, H^{(n),\text{\normalfont PLSA}})$, we obtain the following result.

\begin{lemma}\label{lemma:nmf_plsa_iterates}
    Let $X\in\R^{V\times D}_+$ and let $\lambda_d = \sum_v x_{vd}$. Let $W^{(0)}\in\Delta_{V-1}^K, \, H^{(0)}\in\Delta_{K-1}^D$ be normalized initial values. Then for all $n\geq1$
    \begin{align*}
        W^{(n), \text{\normalfont NMF}} &= W^{(n), \text{\normalfont PLSA}}, \\
        h_{kd}^{(n), \text{\normalfont NMF}} &= \lambda_d h_{kd}^{(n), \text{\normalfont PLSA}}, \\
        \sum_k h_{kd}^{(n), \text{\normalfont NMF}} &= \lambda_d.
    \end{align*}
\end{lemma}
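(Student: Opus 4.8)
The plan is to establish all three identities simultaneously by induction on $n$, using as the driving invariant the equality of the posterior weights $\phi^{(n)}_{vkd} = w^{(n)}_{vk} h^{(n)}_{kd} / (W^{(n)}H^{(n)})_{vd}$ computed by the two algorithms. The crucial elementary observation is that $\phi^{(n)}_{vkd}$ is invariant under any column-wise rescaling of $H$: if $W^{(n),\text{\normalfont NMF}} = W^{(n),\text{\normalfont PLSA}}$ and $h^{(n),\text{\normalfont NMF}}_{kd} = \lambda_d\, h^{(n),\text{\normalfont PLSA}}_{kd}$, then the common factor $\lambda_d$ cancels between numerator and denominator of $\phi$, so $\phi^{(n),\text{\normalfont NMF}}_{vkd} = \phi^{(n),\text{\normalfont PLSA}}_{vkd}$.

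First I would rewrite the two updates of \Cref{alg:mu_nmf_constraint1} purely in terms of $\phi^{(n)}$, exactly as in \eqref{eq:update_rule_topic_weights}: absorbing the leading factor $w^{(n)}_{vk}$ (respectively $h^{(n)}_{kd}$) into the fraction turns the NMF updates into $w^{(n+1)}_{vk} \propto \sum_d x_{vd}\phi^{(n)}_{vkd}$ and $h^{(n+1)}_{kd} = \sum_v x_{vd}\phi^{(n)}_{vkd}$. Comparing with \Cref{alg:plsa}, the $W$ update is then \emph{identical} to the PLSA update, since both normalize columns to $\Delta_{V-1}$, while the PLSA topic-weight update is the same expression renormalized to sum to one over $k$, cf.\ \eqref{eq:update_rule_normalized_topic_weights}. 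Assuming $\phi^{(n),\text{\normalfont NMF}} = \phi^{(n),\text{\normalfont PLSA}}$, the shared $W$ update gives $W^{(n+1),\text{\normalfont NMF}} = W^{(n+1),\text{\normalfont PLSA}}$ at once. For the topic weights, summing the NMF update over $k$ and using $\sum_k \phi^{(n)}_{vkd} = 1$ yields $\sum_k h^{(n+1),\text{\normalfont NMF}}_{kd} = \sum_v x_{vd} = \lambda_d$; this is the third identity and simultaneously identifies the PLSA normalization constant as $\lambda_d$, giving $h^{(n+1),\text{\normalfont NMF}}_{kd} = \lambda_d\, h^{(n+1),\text{\normalfont PLSA}}_{kd}$. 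Applying the invariance observation to these $(n+1)$-iterates then returns $\phi^{(n+1),\text{\normalfont NMF}} = \phi^{(n+1),\text{\normalfont PLSA}}$, closing the induction. The base case is the common normalized initialization $(W^{(0)}, H^{(0)})$, for which the two $\phi^{(0)}$ coincide trivially.

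The main obstacle is not conceptual but bookkeeping: one must track where the factor $\lambda_d$ first appears. At $n=0$ both algorithms hold the same normalized $H^{(0)} \in \Delta_{K-1}^D$, so no $\lambda_d$ is present and the iterates agree outright; the discrepancy is introduced only at the first update, where PLSA renormalizes the topic weights while the $W$-constrained NMF does not, and it is exactly this single renormalization that propagates as the factor $\lambda_d$ for all $n \geq 1$. Care is therefore needed to state the identities for $n \geq 1$ and to verify that column scaling of $H$ is the \emph{only} difference between the two iterate sequences at every step.
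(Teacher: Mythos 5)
Your proof is correct and takes essentially the same route as the paper: an induction on the iterates whose key ingredient is the identity $\sum_{k,v} x_{vd}\,\phi^{(n)}_{vkd} = \sum_v x_{vd} = \lambda_d$, which is precisely the one-line justification the paper gives. You simply make explicit what the paper leaves implicit, namely the invariance of $\phi^{(n)}$ under column-wise rescaling of $H$ and the rewriting of both algorithms' updates in terms of $\phi^{(n)}$.
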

\begin{proof}
    Using $\sum_{k,v} x_{vd} \phi_{vkd}^{(n)} = \sum_v x_{vd} = \lambda_d$, the statement follows by induction.
\end{proof}
\begin{corollary}\label{corollary:nmf_plsa_fixed_points}
    Let $X\in\R^{V\times D}_+$ and let $\lambda_d = \sum_v x_{vd}$. Let $(W^{*,\text{\normalfont NMF}}, H^{*,\text{\normalfont NMF}})$ be a fixed point of NMF with a normalization constraint on $W$ \Cref{alg:mu_nmf_constraint1}. Then $(W, H)$ with $W = W^{*,\text{\normalfont NMF}}$ and $h_{kd} = h_{kd}^{*, \text{\normalfont NMF}} / \lambda_d$ is a fixed point of NMF with a normalization constraint on both $W$ and $H$ / PLSA \Cref{alg:mu_nmf_constraint2}. Conversely, let $(W^{*,\text{\normalfont PLSA}}, H^{*,\text{\normalfont PLSA}})$ be a fixed point of \Cref{alg:mu_nmf_constraint2}. Then $(W, H)$ with $W = W^{*,\text{\normalfont PLSA}}$ and $h_{kd} = \lambda_d h_{kd}^{*, \text{\normalfont PLSA}}$ is a fixed point of \Cref{alg:mu_nmf_constraint1}.
\end{corollary}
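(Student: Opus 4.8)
The plan is to deduce the corollary from \Cref{lemma:nmf_plsa_iterates} by reducing each fixed-point statement to a single iteration, the key enabling observation being that one step of \Cref{alg:mu_nmf_constraint1} is \emph{invariant under column rescalings of} $H$. Indeed, both updates of \Cref{alg:mu_nmf_constraint1} depend on $H$ only through the responsibilities $\phi_{vkd}=w_{vk}h_{kd}/(WH)_{vd}$ (the $w$-update is $\propto\sum_d x_{vd}\phi_{vkd}$ and the $h$-update is $\sum_v x_{vd}\phi_{vkd}$), and $\phi_{vkd}$ is unchanged when $h_{kd}$ is replaced by $c_d h_{kd}$ because the factor $c_d$ cancels between numerator and denominator. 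Writing $T_{\text{NMF}}$ for one iteration of \Cref{alg:mu_nmf_constraint1}, this gives $T_{\text{NMF}}(W,\,H\,\mathrm{diag}(c))=T_{\text{NMF}}(W,H)$ for any positive $c$. The second ingredient is the automatic column-sum identity at an \Cref{alg:mu_nmf_constraint1} fixed point: summing the multiplicative $h$-fixed-point equation $h^{*}_{kd}=h^{*}_{kd}\sum_v x_{vd}\,w^{*}_{vk}/(W^{*}H^{*})_{vd}$ over $k$ and using $\sum_k w^{*}_{vk}h^{*}_{kd}=(W^{*}H^{*})_{vd}$ collapses the right-hand side to $\sum_v x_{vd}=\lambda_d$, so $\sum_k h^{*,\text{NMF}}_{kd}=\lambda_d$. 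I assume throughout that each document is nonempty, $\lambda_d>0$, and that $(W^{*}H^{*})_{vd}>0$ whenever $x_{vd}>0$, as is standard for the multiplicative updates to be well defined.

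For the forward direction, let $(W^{*,\text{NMF}},H^{*,\text{NMF}})$ be a fixed point of \Cref{alg:mu_nmf_constraint1}; its $W$-block is column-normalized because the algorithm normalizes $W$ at every step, and the column-sum identity makes $\widetilde{H}$ with $\widetilde{h}_{kd}=h^{*,\text{NMF}}_{kd}/\lambda_d$ column-normalized as well, so $(W^{*,\text{NMF}},\widetilde{H})$ is an admissible common normalized initialization for both algorithms. By rescaling invariance, $T_{\text{NMF}}(W^{*,\text{NMF}},\widetilde{H})=T_{\text{NMF}}(W^{*,\text{NMF}},H^{*,\text{NMF}})=(W^{*,\text{NMF}},H^{*,\text{NMF}})$, so the first NMF iterate from this initialization is exactly the original fixed point. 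Applying \Cref{lemma:nmf_plsa_iterates} at $n=1$ then transfers this to PLSA: $W^{(1),\text{PLSA}}=W^{(1),\text{NMF}}=W^{*,\text{NMF}}$ and $h^{(1),\text{PLSA}}_{kd}=h^{(1),\text{NMF}}_{kd}/\lambda_d=h^{*,\text{NMF}}_{kd}/\lambda_d=\widetilde{h}_{kd}$, i.e.\ \Cref{alg:mu_nmf_constraint2} maps $(W^{*,\text{NMF}},\widetilde{H})$ to itself, which is the claim.

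The converse is symmetric. Starting from a fixed point $(W^{*,\text{PLSA}},H^{*,\text{PLSA}})$ of \Cref{alg:mu_nmf_constraint2}, both blocks are already normalized, so I use this point as the common initialization and apply \Cref{lemma:nmf_plsa_iterates} at $n=1$ in reverse: since the PLSA iterate returns the fixed point, $W^{(1),\text{NMF}}=W^{*,\text{PLSA}}$ and $h^{(1),\text{NMF}}_{kd}=\lambda_d h^{*,\text{PLSA}}_{kd}$, i.e.\ $T_{\text{NMF}}(W^{*,\text{PLSA}},H^{*,\text{PLSA}})=(W^{*,\text{PLSA}},\lambda H^{*,\text{PLSA}})$; rescaling invariance upgrades this to $T_{\text{NMF}}(W^{*,\text{PLSA}},\lambda H^{*,\text{PLSA}})=(W^{*,\text{PLSA}},\lambda H^{*,\text{PLSA}})$, so the rescaled point is an \Cref{alg:mu_nmf_constraint1} fixed point. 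The routine content is the bookkeeping of the proportionality constants; the step I expect to require the most care is the column-sum identity together with checking that the rescaled initialization genuinely meets the normalized-initialization hypothesis of \Cref{lemma:nmf_plsa_iterates}, and that entries with $h^{*}_{kd}=0$, where the multiplicative $h$-update holds trivially, are handled consistently on both sides.
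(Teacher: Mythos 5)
Your proof is correct and follows essentially the same route as the paper, which deduces the corollary from \Cref{lemma:nmf_plsa_iterates} together with the column-sum identity $\sum_k h^{*,\text{NMF}}_{kd}=\sum_v x_{vd}=\lambda_d$ at fixed points of \Cref{alg:mu_nmf_constraint1}. The only difference is that you make explicit the rescaling-invariance of one NMF step under column scalings of $H$ (needed so the normalized point qualifies as an initialization for the lemma), a detail the paper's one-line proof leaves implicit.
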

\begin{proof}
    Follows directly from \Cref{lemma:nmf_plsa_iterates} and $\sum_{k} h_{kd}^{*,\text{\normalfont NMF}} = \sum_v x_{vd} = \lambda_d$ for any fixed point of \Cref{alg:mu_nmf_constraint1}.
\end{proof}

\begin{algorithm}[H]
    \caption{Joint multiplicative updates for NMF \eqref{optimization_problem_nmf_constraint2} / PLSA}\label{alg:mu_nmf_constraint2}
    \begin{algorithmic}
        \Require $X\in\R_{+}^{V\times D}$, initializations $W^{(0)}\in\Delta_{V-1}^K, \, H^{(0)}\in\Delta_{K-1}^D$
        \State $n \gets 0$
        \While {$\KL(X \, || \,W^{(n)}H^{(n)})$ not converged}
        \State $w^{(n+1)}_{vk} \propto w^{(n)}_{vk} \sum_d x_{vd} \frac{ h^{(n)}_{kd} }{ (W^{(n)}H^{(n)})_{vd} }$
        \State $h^{(n+1)}_{kd} \propto h^{(n)}_{kd} \sum_v x_{vd} \frac{ w^{(n)}_{vk} }{ (W^{(n)}H^{(n)})_{vd} }$
        \State $n \gets n + 1$
        \EndWhile
    \end{algorithmic}
\end{algorithm}
\section{Dirichlet--Poisson model}\label{appendix:dirichlet_poisson}
In this section, we derive the variational lower bound of the Dirichlet--Poisson model with a normalized topic matrix $W\in\Delta_{V-1}^K$ described in \Cref{section_connections_lda}. Recall the generative process of the word counts of a single document $d$:
\begin{enumerate}
    \item Sample the topic proportions $h_d \sim \Dirichlet(\alpha)$.
    \item Sample the topic contributions $z_{vkd} \sim \Poisson(w_{vk}h_{kd})$.
    \item $X_{vd} = \sum_k z_{vkd}$.
\end{enumerate}
Under this model, the joint likelihood of the observed counts $x_d$ and the latent variables $(h_d, z_d)$ is given by
\begin{equation*}
    p(x_d, h_d, z_d | W, \alpha) = \Big(\prod_v \mathbbm{1}_{\sum_k z_{vkd} = x_{vd}}\Big) p(h_d | \alpha) \prod_{v,k} p(z_{vkd} | W, h_d),
\end{equation*}
where
\begin{equation*}
    p(z_{vkd} | W, h_d) = (w_{vk}h_{kd})^{z_{vkd}} \frac{e^{-w_{vk}h_{kd}}}{z_{vkd}!}
\end{equation*}
is the probability mass function of the Poisson distribution. Identically to LDA, the introduction of the Dirichlet prior on the topic proportions results in an intractable posterior distribution of the latent variables. We therefore resort to variational inference and approximate the true posterior with a variational distribution $q$ given by
\begin{align*}
    q(h, z | \beta, \phi) = \prod_d q_d(h_d, z_d | \beta_d, \phi_d), \qquad q_d(h_d, z_d | \beta_d, \phi_d) = q(h_d | \beta_d) \prod_v q(z_{vd} | x_{vd}, \phi_{vd}),
\end{align*}
where $\beta_d$ and $\phi_{vd}$ are Dirichlet and multinomial parameters, respectively. We will now compute the individual terms of the per-document variational lower bound
\begin{align*}\label{lda_dirichlet_poisson}
    \log p(x_d | W, \alpha) &\geq \Eq{\log p(x_d, h_d, z_d | W, \alpha)} - \Eq{\log q(h_d, z_d | \beta_d, \phi_d)} \\
    &= \Eq{\log p(h_d | \alpha)} + \Eq{\log p(x_d, z_d | W, h_d)} - \Eq{\log q(h_d)} - \Eq{\log q(z_d)} \\
    &\eqqcolon \mathcal{L}^{\text{DP}}_d (x_d | W, \alpha, \beta_d, \phi_d)
\end{align*}
of the Dirichlet--Poisson model. Plugging in the density of the Dirichlet distribution, the first and third term are given by
\begin{align*}
    \Eq{\log p(h_d|\alpha)} &= \log\Gamma(\textstyle\sum_k \alpha_k) - \displaystyle\sum_k \log\Gamma(\alpha_k) + \sum_k (\alpha_k -1) \Eq{\log h_{kd}}, \\
    \Eq{\log q(h_d)} &= \log\Gamma(\textstyle\sum_k \beta_{kd}) - \displaystyle\sum_k \log\Gamma(\beta_{kd}) + \sum_k(\beta_{kd} -1) \Eq{\log h_{kd}}
\end{align*}
for
\begin{equation}\label{eq:eqlogh_lda}
    \Eq{\log h_{kd}} = \psi(\beta_{kd}) - \psi(\textstyle\sum_{k'} \beta_{k'd}) \eqqcolon \log(\widetilde{h}_{kd})
\end{equation}
and the digamma function $\psi$, cf.\ \cite[A.1, eq.\ (14), (15)]{blei2003latent}. Using the normalization constraints on the topic matrix and topic weights, the second term reduces to
\begin{align*}
    \Eq{\log p(x_d, z_d | W, h_d)} &= \sum_{v,k} \Eq{\log p(z_{vkd} | W, h_d)} \\
    &= \sum_{v,k} \Eq{z_{vkd}} \log(w_{vk}\widetilde{h}_{kd}) - \sum_{v,k} w_{vk}\Eq{h_{kd}} - \sum_{v,k} \Eq{\log z_{vkd}!} \\
    &= \sum_{v,k} \phi_{vkd} x_{vd} \log(w_{vk}\widetilde{h}_{kd}) - 1 - \sum_{v,k} \Eq{\log z_{vkd}!},
\end{align*}
and the last term is given by
\begin{align*}
    \Eq{\log q(z_d)} &= \sum_v \Eq{\log z_{vd}} \\
    &= \sum_v \log x_{vd}! - \sum_{v,k} \Eq{\log z_{vkd}!} + \sum_{v,k} \Eq{z_{vkd}} \log\phi_{vkd} \\
    &= \sum_v \log x_{vd}! - \sum_{v,k} \Eq{\log z_{vkd}!} + \sum_{v,k} \phi_{vkd}x_{vd} \log\phi_{vkd}.
\end{align*}
When combining all terms except data-dependent constants, the summands involving $\Eq{\log z_{vkd}!}$ cancel each other and the per-document variational lower bound simplifies to
\begin{align*}
    \mathcal{L}^{\text{DP}}_d (x_d | W, \alpha, \beta_d, \phi_d) &= \sum_{v,k} x_{vd} \phi_{vkd} \log \frac{w_{vk}\widetilde{h}_{kd}}{\phi_{vkd}} \, + \, \log\Gamma(\textstyle{\sum}_k \alpha_k) - \log\Gamma(\textstyle{\sum}_k \beta_{kd}) \\
    &\phantom{=} + \displaystyle\sum_{k} \Big( \log \Gamma(\beta_{kd}) - \log\Gamma(\alpha_k) + (\alpha_k - \beta_{kd}) \Eq{\log h_{kd}} \Big).
\end{align*}
Finally, the variational lower bound of the Dirichlet--Poisson model is
\begin{equation*}
    \mathcal{L}^{\text{DP}} (X | W, \alpha, \beta, \phi) = \sum_d \mathcal{L}^{\text{DP}}_d (x_d | W, \alpha, \beta_d, \phi_d),
\end{equation*}
which is identical to the variational lower bound \eqref{lda_elbo_concrete} of LDA.
\section{Equivalent generative models}\label{appendix:equivalent_gen_models}
The connections between NMF and PLSA or LDA exist because of the close relationships of their underlying generative models, which we summarize in this section.

We say that a parameter of a model is trivial if the marginal distribution of the observed variables $X$ has a unique global optimum with respect to this parameter that can be immediately estimated from the observations $X$. A model parameter that is not trivial is called non-trivial.

We call two models $m_1$ and $m_2$ equivalent if they have the same non-trivial model parameters $\theta$ and the marginal distributions of the observed variables $X$ are identical up to multiplicative and additive constants after estimating the trivial parameters, i.e.,
\begin{equation*}
    p^{m_1}_\theta(X) = c_1p^{m_2}_\theta(X) + c_2
\end{equation*}
for some data-dependent constants $c_1(X), c_2(X) \in\R.$ It is important to note that potentially existing latent variables are allowed to differ.

\begin{lemma}
    Let the model parameters be $W\in\R^{V\times K}_{+}$ and $h\in\R^{K}_{+}$. Then the following models are equivalent:
    \begin{enumerate}[label=(\roman*)]
      \item $X_v \sim \Poisson((Wh)_v)$
      \item $z_{vk} \sim \Poisson(w_{vk}h_k)$ independent \\[4pt]
      $X_v = \sum_k z_{vk}$
    \end{enumerate}
\end{lemma}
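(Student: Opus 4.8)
The plan is to prove equivalence by showing that the two models share the same marginal distribution of $X$ up to the data-dependent constant arising from the latent reparametrization, and that they have identical non-trivial parameters $(W, h)$. The key fact I would rely on is the classical \emph{Poisson superposition} (or additivity) property: if $z_{vk} \sim \Poisson(w_{vk} h_k)$ are mutually independent across $k$, then their sum $X_v = \sum_k z_{vk}$ is again Poisson-distributed with rate equal to the sum of the individual rates, namely $\sum_k w_{vk} h_k = (Wh)_v$. This immediately gives that the marginal law of $X_v$ under model (ii) coincides with the law of $X_v$ under model (i).

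First I would fix an index $v$ and compute the marginal of $X_v$ in model (ii) by summing the joint mass function of the independent Poisson variables over all latent configurations $(z_{vk})_k$ consistent with the constraint $\sum_k z_{vk} = X_v$. The standard way to carry this out is via the convolution of Poisson laws, which can be verified either by a direct multinomial-coefficient computation or, more cleanly, by multiplying probability generating functions: the generating function of a sum of independent Poissons is the product of the individual generating functions, $\prod_k \exp(w_{vk} h_k (s-1)) = \exp((Wh)_v (s-1))$, which is exactly the generating function of $\Poisson((Wh)_v)$. Since $X_v$ for different $v$ are independent in both models, taking the product over $v$ yields the full marginal $p(X \mid W, h)$, establishing $p^{(\text{ii})}_{(W,h)}(X) = p^{(\text{i})}_{(W,h)}(X)$ with $c_1 = 1$ and $c_2 = 0$.

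I would then address the definitional bookkeeping required by the equivalence notion introduced just above the statement. Both models have the same parameters $(W, h)$, and neither $W$ nor $h$ is trivial (they cannot be read off directly from $X$), so the non-trivial parameter sets coincide and there are no trivial parameters to estimate. The latent variables differ---model (i) has none while model (ii) carries the latent tensor $(z_{vk})$---but the definition explicitly permits the latent structures to differ, so this poses no obstacle. Combining the matching marginals with the matching non-trivial parameters gives equivalence.

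I do not expect a genuine obstacle here, as the result is essentially Poisson additivity dressed in the paper's equivalence formalism. The one point requiring mild care is making sure the constant factors are tracked correctly: when one writes out the convolution sum explicitly, a multinomial coefficient $\binom{X_v}{z_{v1}, \dots, z_{vK}}$ appears, and one must confirm that after summing it recombines into the single factor $1/X_v!$ of the aggregated Poisson rather than leaving a spurious $X_v$-dependent constant. The generating-function argument sidesteps this entirely and is the cleaner route, so I would present that as the main computation and relegate any direct convolution check to a remark.
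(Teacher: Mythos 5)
Your proposal is correct and takes essentially the same approach as the paper: the paper's proof simply asserts that both models yield the joint marginal $p(X_1=x_1,\ldots,X_V=x_V \mid W,h) = e^{-\sum_v (Wh)_v}\prod_v \frac{1}{x_v!}(Wh)_v^{x_v}$, which is exactly the Poisson superposition fact you establish. Your generating-function verification and the bookkeeping about non-trivial parameters are details the paper leaves implicit, but the underlying argument is identical.
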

\begin{proof}
    For both models, it holds
    \begin{equation}\label{eq_marginal_poisson}
        p(X_1=x_1, \ldots, X_V=x_V | W,h) = e^{-\sum_v (Wh)_v} \prod_v \frac{1}{x_v!} (Wh)_v^{x_v}.
    \end{equation}
\end{proof}

\begin{lemma}
    Let the model parameters be $W\in\R^{V\times K}_{+}$, $h\in\R^{K}_{+}$ and the total number of observations $N$. Then the following models are equivalent:
    \begin{enumerate}[label=(\roman*)]
        \item $(X_v)_v \sim \Multinomial(N, p_{v}\propto (Wh)_v)$
        \item $(z_{vk})_{vk} \sim \Multinomial(N, p_{vk}\propto w_{vk}h_k)$ \\[4pt]
        $X_v = \sum_k z_{vk}$
    \end{enumerate}
    Additionally, the total number of observations $N$ is a trivial model parameter. If the columns of $W$ are normalized, i.e., $W\in\Delta_{V-1}^K$, then (i) and (ii) are also equivalent to
    \begin{enumerate}[label=(\roman*)]
        \setcounter{enumi}{2}
        \item $\forall n=1,\ldots,N$:
            \begin{itemize}
                \item $z_n \sim \Cat(h)$
                \item $\nu_n \sim \Cat(w_{z_n})$
            \end{itemize}
        $X_v = \sum_n \mathbbm{1}_{\nu_n = v}$
    \end{enumerate}
\end{lemma}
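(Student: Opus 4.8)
The plan is to reduce the claimed equivalence to a single computation: the marginal law of the observed count vector $X=(X_v)_v$. By the definition of equivalence preceding the lemma, it suffices to show that the three marginals $p^{(i)}(X)$, $p^{(ii)}(X)$, and $p^{(iii)}(X)$ coincide as functions of the non-trivial parameters $(W,h)$ — in fact with $c_1=1$ and $c_2=0$ — after treating $N$ as a trivial parameter. Throughout I would read $\Cat$ and $\Multinomial$ with the convention $p\propto(\cdot)$ used in the statement, so that the common category probabilities are $p_v = (Wh)_v / \sum_{v'}(Wh)_{v'}$.

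First I would record model (i) directly: the multinomial PMF gives
\[
    p^{(i)}(X=x) = \frac{N!}{\prod_v x_v!}\prod_v p_v^{x_v},
\]
supported on $\{x : \sum_v x_v = N\}$. For model (ii), I would invoke the standard category-merging property of the multinomial: aggregating the $VK$ cells $(v,k)$ of $(z_{vk})_{vk}\sim\Multinomial(N, p_{vk}\propto w_{vk}h_k)$ along $k$ to form $X_v=\sum_k z_{vk}$ again yields a multinomial, now with cell probabilities $\sum_k p_{vk} = \sum_k w_{vk}h_k / \sum_{v',k'} w_{v'k'}h_{k'} = (Wh)_v/\sum_{v'}(Wh)_{v'} = p_v$. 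Hence $p^{(ii)}(X) = p^{(i)}(X)$ identically; the latent tensor $z$ differs from the latent cell counts implicit in (i), which the definition of equivalence explicitly permits.

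For model (iii), under the hypothesis $W\in\Delta_{V-1}^K$ the categorical $\Cat(w_{z_n})$ over the $z_n$-th column of $W$ is well-defined, and I would first compute the per-word marginal $p(\nu_n=v)=\sum_k \tfrac{h_k}{\sum_{k'}h_{k'}} w_{vk} = (Wh)_v / \sum_{k'}h_{k'}$. The key bookkeeping step is that column-normalization gives $\sum_v(Wh)_v = \sum_k h_k \sum_v w_{vk} = \sum_k h_k$, so this per-word law is exactly $p_v$. Since the $N$ words $\nu_1,\dots,\nu_N$ are i.i.d., their occurrence counts $X_v=\sum_n\mathbbm{1}_{\nu_n=v}$ follow $\Multinomial(N,(p_v)_v)$, matching (i). I would emphasize that the normalization $W\in\Delta_{V-1}^K$ plays a double role here — it is simultaneously what makes $\Cat(w_{z_n})$ a valid distribution and what collapses the normalizing constant to $\sum_k h_k$ — which is precisely why (iii) appears only under this extra assumption.

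Finally I would dispatch the triviality of $N$: for each of the three marginals the likelihood is nonzero only on $\{x:\sum_v x_v=N\}$, so $N$ is uniquely recovered from a single observation as $N=\sum_v x_v$, matching the definition of a trivial parameter. The proof is essentially routine once these marginals are written down; the only genuine subtlety — more normalization bookkeeping than true obstacle — is the identity $\sum_k h_k = \sum_v(Wh)_v$ in model (iii) and the resulting necessity of the column-normalization hypothesis, together with the observation that the three models share the same non-trivial parameters $(W,h)$ only through the induced probability vector $(p_v)_v$, i.e.\ through $Wh$ up to an overall positive scale.
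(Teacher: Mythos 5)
Your proposal is correct and takes essentially the same approach as the paper: both establish the equivalence by computing the marginal law of $X$ under each model and showing it is the same multinomial with cell probabilities $(Wh)_v/\sum_{v'}(Wh)_{v'}$, with the indicator on $\sum_v x_v = N$ making $N$ trivially identifiable. Your appeal to the category-merging property of the multinomial in (ii) is just a packaged form of the paper's explicit multinomial-theorem computation, and your per-word marginal argument for (iii) and the normalization identity $\sum_v (Wh)_v = \sum_k h_k$ spell out details the paper leaves implicit.
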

\begin{proof}
    For all three models, it holds
    \begin{equation}\label{eq_marginal_multinomial}
        p(X_1=x_1, \ldots, X_V=x_V | W,h,N) = \mathbbm{1}_{\sum_v x_v = N} \frac{N!}{\prod_v x_v!} \prod_v \Big(\frac{(Wh)_v}{\sum_{v'} (Wh)_{v'}}\Big)^{x_v}.
    \end{equation}
    Note that to see \eqref{eq_marginal_multinomial} for (ii), the trick is to apply the multinomial theorem
    \begin{equation*}
        \sum_{\substack{z\in\N_0^K \\ \sum_k z_k = x}} \frac{x!}{\prod_k z_k!} \prod_k y_k^{z_k} = \Big(\sum_k y_k \Big)^x
    \end{equation*}
    for all $x\in\N_0$ and $y\in\R^K$.
\end{proof}

\begin{lemma}
    Let the model parameters be normalized $W\in\Delta_{V-1}^K$ and $h\in\Delta_{K-1}$, or normalized $W, h$ and the total number of observations $N$. Then the following models are equivalent:
    \begin{enumerate}[label=(\roman*)]
      \item $X_v \sim \Poisson((Wh)_v)$
      \item $z_{vk} \sim \Poisson(w_{vk}h_k)$ independent \\[4pt]
      $X_v = \sum_k z_{vk}$
      \item $(X_v)_v \sim \Multinomial(N, p_{v} = (Wh)_v)$
      \item $(z_{vk})_{vk} \sim \Multinomial(N, p_{vk} = w_{vk}h_k)$ \\[4pt]
        $X_v = \sum_k z_{vk}$
      \item $\forall n=1,\ldots,N$:
        \begin{itemize}
            \item $z_n \sim \Cat(h)$
            \item $\nu_n \sim \Cat(w_{z_n})$
        \end{itemize}
        $X_v = \sum_n \mathbbm{1}_{\nu_n = v}$
    \end{enumerate}
\end{lemma}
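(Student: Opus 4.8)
The plan is to reduce everything to the two preceding lemmas by transitivity of the equivalence relation, so that the only genuinely new work is to link the Poisson group $\{(i),(ii)\}$ to the multinomial group $\{(iii),(iv),(v)\}$. First I would invoke the first of the two preceding lemmas to get that (i) and (ii) are equivalent with common marginal \eqref{eq_marginal_poisson}, and---since $W\in\Delta_{V-1}^K$ has normalized columns---the second preceding lemma to get that (iii), (iv), and (v) are equivalent with common marginal \eqref{eq_marginal_multinomial}, together with the fact that $N$ is a trivial parameter there.

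The key observation is that the double normalization forces the Poisson rates to sum to one: since the columns of $W$ lie in $\Delta_{V-1}$ and $h\in\Delta_{K-1}$,
\[
    \sum_v (Wh)_v = \sum_k h_k \sum_v w_{vk} = \sum_k h_k = 1.
\]
Substituting $\sum_v (Wh)_v = 1$ into \eqref{eq_marginal_poisson}, the Poisson marginal collapses to $e^{-1}\prod_v (Wh)_v^{x_v}/x_v!$. On the multinomial side, $N$ is trivial, so I would estimate it from the data as $N=\sum_v x_v$; then the indicator $\mathbbm{1}_{\sum_v x_v = N}$ is identically one on the support, the renormalization by $\sum_{v'}(Wh)_{v'}=1$ is vacuous, and \eqref{eq_marginal_multinomial} reduces to $\big(\sum_v x_v\big)!\,\prod_v (Wh)_v^{x_v}/x_v!$.

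Comparing the two expressions, they differ only by the factor
\[
    c(x) = \frac{e^{-1}}{\big(\sum_v x_v\big)!},
\]
which depends on the observations but not on the non-trivial parameters $(W,h)$. This is precisely the relation $p^{m_1}_\theta(X)=c_1(X)\,p^{m_2}_\theta(X)+c_2(X)$ from the definition of model equivalence (with $c_1=c(x)$ and $c_2=0$), so the Poisson and multinomial groups are equivalent. Combining this with the two within-group equivalences and using that the affine relation defining equivalence is transitive then yields the equivalence of all five models.

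The main obstacle I expect is the bookkeeping around the trivial parameter $N$ and the data-dependent constant rather than any hard inequality: one must argue carefully that ``estimating'' $N$ as $\sum_v x_v$ makes the support indicator disappear and that the resulting ratio of marginals is a function of $x$ alone. Conceptually this is just the classical fact that independent Poissons conditioned on their total are multinomial while the total is itself Poisson, but it has to be phrased inside the paper's equivalence formalism, and it is the normalization $\sum_v(Wh)_v=1$ that makes the Poisson total exactly $\Poisson(1)$ and lets the constants come out free of $(W,h)$.
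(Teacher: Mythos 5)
Your proposal is correct and follows essentially the same route as the paper: the paper's (one-line) proof likewise invokes the common marginals \eqref{eq_marginal_poisson} and \eqref{eq_marginal_multinomial} from the two preceding lemmas, trivially estimates $N$, and uses $\sum_v (Wh)_v = 1$ to conclude the two expressions agree up to a data-dependent constant. Your write-up merely makes explicit the constant $e^{-1}/(\sum_v x_v)!$ and the transitivity bookkeeping that the paper leaves implicit.
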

\begin{proof}
    With observed $X$, trivially estimated $N$ and normalized model parameters, \eqref{eq_marginal_poisson} and \eqref{eq_marginal_multinomial} are identical up to a constant due to $\sum_v (Wh)_v = 1.$
\end{proof}
\section{Gamma--Poisson model}\label{appendix:gamma_poisson}

In this section, we extend the connections between NMF with a normalization constraint on $W$ \eqref{optimization_problem_nmf_constraint1} and NMF with a normalization constraint on both $W$ and $H$ \eqref{optimization_problem_nmf_constraint2} / PLSA presented in \Cref{appendix:nmf_with_norm_constraints} to their Bayesian counterparts: the Gamma--Poisson and the Dirichlet--Poisson / LDA models. We first introduce the Gamma--Poisson model and derive its variational inference algorithm following \cite{canny2004gap,buntine2005discrete}. Then, we extend the connection between their generative models from \cite[Lemma 1]{buntine2005discrete} to an algorithmic equivalence of their VI algorithms.

In the Gamma--Poisson model we consider, the topics $W\in\Delta_{V-1}^K$ are normalized model parameters. The topic weights $h_d$ are not normalized and Gamma priors $h_{kd} \sim \Gamma(\alpha_k, a_k)$ for additional model parameters $\alpha, a\in\R^K_{> 0}$ are introduced. The generative process of each document $d$ then becomes
\begin{enumerate}
    \item Sample the topic weights $h_{kd} \sim \Gamma(\alpha_k, a_k)$.
    \item Sample the topic contributions $z_{vkd} \sim \Poisson(w_{vk}h_{kd})$.
    \item $X_{vd} = \sum_k z_{vkd}$.
\end{enumerate}

Here and in the following, $\Gamma(\alpha_k, a_k)$ denotes the Gamma distribution with shape parameter $\alpha_k$ and rate parameter $a_k$. The joint likelihood of the observed counts $x_{vd}$ and the latent variables $(h_d, z_d)$ is then given by
\begin{equation*}
    p(x_d, h_d, z_d | W, \alpha) = \Big(\prod_v \mathbbm{1}_{\sum_k z_{vkd} = x_{vd}}\Big) \prod_k p(h_{kd} | \alpha_k, a_k) \prod_{v,k} p(z_{vkd} | W, h_d),
\end{equation*}
where
\begin{align*}
    p(h_{kd} | \alpha_k, a_k) &= \frac{a_k^{\alpha_k}}{\Gamma(\alpha_k)} h_{kd}^{\alpha_k - 1} e^{-a_k h_{kd}}, \\
    p(z_{vkd} | W, h_d) &= (w_{vk}h_{kd})^{z_{vkd}} \frac{e^{-w_{vk}h_{kd}}}{z_{vkd}!}
\end{align*}
denote the density of the Gamma distribution and probability mass function of the Poisson distribution, respectively. Similar to the Dirichlet--Poisson / LDA model, the introduction of the Gamma prior on the topic weights results in an intractable posterior distribution of the latent variables. We apply variational inference and approximate the true posterior with a variational distribution $q$ given by
\begin{align*}
    q(h, z | \beta, b, \phi) &= \prod_d q_d(h_d, z_d | \beta_d, b_d, \phi_d), \\
    q_d(h_d, z_d | \beta_d, b_d, \phi_d) &= \prod_k q(h_{kd} | \beta_{kd}, b_{kd}) \prod_v q(z_{vd} | x_{vd}, \phi_{vd}),
\end{align*}
where $(\beta_{kd}, b_{kd})$ and $\phi_{vd}$ are Gamma and multinomial parameters, respectively. For completeness, we will now compute the individual terms of the per-document variational lower bound
\begin{align*}
    \log p(x_d | W, \alpha, a) &\geq \Eq{\log p(x_d, h_d, z_d | W, \alpha, a)} - \Eq{\log q(h_d, z_d | \beta_d, b_d, \phi_d)} \\
    &= \Eq{\log p(h_d | \alpha, a)} + \Eq{\log p(x_d, z_d | W, h_d)} - \Eq{\log q(h_d)} - \Eq{\log q(z_d)} \\
    &\eqqcolon \mathcal{L}^{\text{GaP}}_d (x_d | W, \alpha, a, \beta_d, b_d, \phi_d)
\end{align*}
of the Gamma--Poisson model. Plugging in the density of the Gamma distribution, the first and third term are given by
\begin{align*}
    \Eq{\log p(h_d|\alpha, a)} &= \sum_k \Big( \alpha_k \log a_k - \log\Gamma(\alpha_k) + (\alpha_k -1) \Eq{\log h_{kd}} - a_k \Eq{h_{kd}} \Big), \\
    \Eq{\log q(h_d)} &= \sum_k \Big( \beta_{kd}\log b_{kd} - \log\Gamma(\beta_{kd}) + (\beta_{kd}-1) \Eq{\log h_{kd}} - b_{kd} \Eq{h_{kd}} \Big)
\end{align*}
for
\begin{align}\label{eq:eqlogh_gap}
    \Eq{\log h_{kd}} &= \psi(\beta_{kd}) - \log b_{kd} \eqqcolon \log(\widetilde{h}_{kd}), \\
    \Eq{h_{kd}} &= \frac{\beta_{kd}}{b_{kd}} \nonumber
\end{align}
and the digamma function $\psi$, cf.\ \cite[eq.\ (14)]{buntine2005discrete}. Identically to the Dirichlet--Poisson model, the second term reduces to
\begin{align*}
    \Eq{\log p(x_d, z_d | W, h_d)} = \sum_{v,k} \phi_{vkd} x_{vd} \log(w_{vk}\widetilde{h}_{kd}) - \sum_k \Eq{h_{kd}} - \sum_{v,k} \Eq{\log z_{vkd}!},
\end{align*}
and the last term is given by
\begin{equation*}
    \Eq{\log q(z_d)} = \sum_v \log x_{vd}! - \sum_{v,k} \Eq{\log z_{vkd}!} + \sum_{v,k} \phi_{vkd}x_{vd} \log\phi_{vkd}.
\end{equation*}
Notice that in contrast to the Dirichlet--Poisson model, the sum $\sum_k \Eq{h_{kd}}$ is not a constant because the topic weights are not normalized. When combining all terms except data-dependent constants, the per-document variational lower bound simplifies to
\begin{align*}
    \mathcal{L}^{\text{GaP}}_d &(x_d | W, \alpha, a, \beta_d, b_d, \phi_d) \\
    &= \sum_{v,k} x_{vd} \phi_{vkd} \log \frac{w_{vk}\widetilde{h}_{kd}}{\phi_{vkd}} - \sum_k \Eq{h_{kd}} + \sum_k \Big(\alpha_k \log a_k - \beta_{kd} \log b_{kd} \Big)\\
    &\phantom{=} + \sum_k \Big(\log\Gamma(\beta_{kd}) - \log\Gamma(\alpha_{k}) + (\alpha_k - \beta_{kd}) \Eq{\log h_{kd}} + (b_{kd} - a_k) \Eq{h_{kd}} \Big).
\end{align*}
Finally, the variational lower bound of the Gamma--Poisson model is
\begin{equation*}
    \mathcal{L}^{\text{GaP}} (X | W, \alpha, a, \beta, b, \phi) = \sum_d \mathcal{L}^{\text{GaP}}_d (x_d | W, \alpha, a, \beta_d, b_d, \phi_d).
\end{equation*}

We again assume that the prior parameters $(\alpha, a)$ are fixed during model training. Optimizing the variational lower bound with respect to the topics $W\in\Delta_{V-1}^K$ and the variational parameters $\phi_{vd}\in\Delta_{K-1}$ works identically to the derivation of the VI algorithm of the Dirichlet--Poisson / LDA model, and their update equations are given by $w_{vk} \propto \sum_d x_{vd} \phi_{vkd}$ and $\phi_{vkd} \propto w_{vk}\widetilde{h}_{kd}$. Moreover, setting the derivatives of the variational lower bound with respect to $\beta_{kd}$ and $b_{kd}$ to zero is equivalent to
\begin{equation*}
    \begin{pmatrix}
        \psi'(\beta_{kd}) & -\frac{1}{b_{kd}}\\
        -\frac{1}{b_{kd}} & \frac{\beta_{kd}}{b_{kd}^2}
    \end{pmatrix}
    \begin{pmatrix}
        \alpha_k + \sum_v x_{vd} \phi_{vkd} - \beta_{kd} \\
        a_k + 1 - b_{kd}
    \end{pmatrix}
    =
    \begin{pmatrix}
        0 \\
        0
    \end{pmatrix},
\end{equation*}
which yields the update equations $\beta_{kd} = \alpha_k + \sum_v x_{vd} \phi_{vkd}$ and $b_{kd} = 1 + a_k$. Rewriting the updates involving $\phi$ in the familiar multiplicative form, we obtain \Cref{alg:gapnmf}. Notice that for fixed prior parameters $(\alpha, a)$ and aside from the trivially estimated variational parameters $b$, the only difference between \Cref{alg:gapnmf} and the Dirichlet--Poisson / LDA \Cref{alg:dpnmf} is the explicit formula for $\Eq{\log h_{kd}}.$

\begin{algorithm}
    \caption{cf.\ \cite[eq.\ (14), (15)]{buntine2005discrete} VI algorithm Gamma--Poisson NMF}\label{alg:gapnmf}
    \begin{algorithmic}
        \Require $X\in\R_{+}^{V\times D}, \, \alpha, a\in\R_{>0}^K$, initializations $W^{(0)}\in\Delta_{V-1}^K, \, \beta^{(0)}\in\R_{>0}^{K \times D}$
        \State $n \gets 0$
        \State $b_{kd} = 1 + a_k$
        \While {$\mathcal{L}^{\text{GaP}}(X | W, \alpha, a, \beta, b, \phi)$ not converged}
        \State $\widetilde{h}_{kd}^{(n)} = \exp(\Eq{\log h_{kd}}) = \frac{\exp(\psi(\beta_{kd}^{(n)}))}{1 + a_k}$
        \State $w^{(n+1)}_{vk} \propto w^{(n)}_{vk} \sum_d x_{vd} \frac{\widetilde{h}_{kd}^{(n)}}{(W^{(n)}\widetilde{H}^{(n)})_{vd}}$
        \State $\beta^{(n+1)}_{kd} = \alpha_k + \widetilde{h}_{kd}^{(n)} \sum_v x_{vd} \frac{w_{vk}^{(n)}}{(W^{(n)}\widetilde{H}^{(n)})_{vd}}$
        \State $n \gets n + 1$
        \EndWhile
    \end{algorithmic}
\end{algorithm}

Before concretizing the algorithmic equivalence of the Gamma--Poisson and Dirichlet--Poisson / LDA VI algorithms, let us recall that the Gamma--Poisson model with rate parameters $a_k=a$ extends the generative model of LDA by also modeling the total counts of words in a document.

\begin{lemma}[{\cite[Lemma 1]{buntine2005discrete}}]
    Let $\alpha\in\R^K_{>0}$ and $a_k=a>0$ be fixed prior parameters. Let the model parameter be $W\in\Delta_{V-1}^K$. Then the following two models are equivalent:
    \begin{enumerate}[label=(\roman*)]
      \item $h_k \sim \Gamma(\alpha_k, a)$ independent, \\[4pt]
      $X_v = \Poisson((Wh)_v)$.
      \item $h \sim \Dirichlet(\alpha)$, \\[4pt]
      $N \sim \PoissonGamma(\sum_k \alpha_k, a)$, \\[4pt]
      $X \sim \Multinomial(N, p=Wh)$.
    \end{enumerate}
\end{lemma}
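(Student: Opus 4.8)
The plan is to show that the two marginals of $X$ coincide by reparametrizing the Gamma weights of model~(i) into a scale and a direction, and then invoking the standard superposition and conditioning properties of the Poisson distribution. Throughout I write $S = \sum_k h_k$ for the total mass of the weights and $\tilde{h}_k = h_k/S$ for the normalized weights, so that $h = S\tilde{h}$.

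First I would invoke the classical Gamma--Dirichlet decomposition: when $h_k \sim \Gamma(\alpha_k, a)$ are independent with the common rate $a$, the scale $S$ and the direction $\tilde{h} = (\tilde{h}_1,\ldots,\tilde{h}_K)$ are independent, with $S \sim \Gamma(\sum_k \alpha_k, a)$ and $\tilde{h} \sim \Dirichlet(\alpha)$. This is exactly the step that converts the Gamma prior of model~(i) into the Dirichlet prior of model~(ii) together with a separate Gamma-distributed total mass.

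Next I would use the normalization $W\in\Delta_{V-1}^K$ to connect the two observation models. Because the columns of $W$ sum to one, $\sum_v (Wh)_v = \sum_k h_k = S$. Conditionally on $h$, the counts $X_v \sim \Poisson((Wh)_v)$ are independent, so Poisson superposition gives $N = \sum_v X_v \mid S \sim \Poisson(S)$; marginalizing over $S \sim \Gamma(\sum_k \alpha_k, a)$ yields precisely $N \sim \PoissonGamma(\sum_k \alpha_k, a)$. The Poisson-to-multinomial conditioning identity then states that, given $N$ and $h$, the vector $(X_v)_v$ is $\Multinomial(N, p)$ with $p_v = (Wh)_v/S = (W\tilde{h})_v$. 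Since $\tilde{h}\sim\Dirichlet(\alpha)$ is independent of $S$ and hence of $N$, this is exactly the generative recipe of model~(ii).

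Finally, since $N = \sum_v X_v$ is a deterministic function of the observations, the reparametrization shows that model~(i) and model~(ii) induce the same joint law of $(N, X)$, so their marginals of $X$ agree exactly; the constants in the definition of equivalence are $c_1 = 1$, $c_2 = 0$, the shared non-trivial parameters are $W$ and $\alpha$, and the latent variables ($h$ versus $(\tilde{h}, N)$) are allowed to differ. A direct marginalization of $h$ in model~(i) would also work, but expanding $\prod_v (Wh)_v^{x_v}$ reintroduces the latent contribution counts $z_{vk}$ and is messier. I expect the one genuinely nontrivial ingredient to be the Gamma--Dirichlet step, namely the independence of $S$ and $\tilde{h}$; once that is in place the remaining superposition and thinning facts are routine, and the only thing to watch is that the normalization of $W$ is what makes the Poisson rates sum to $S$ and the multinomial probabilities equal to $W\tilde{h}$.
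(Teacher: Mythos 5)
Your proof is correct and takes essentially the same approach as the paper's: both rest on the Gamma--Dirichlet decomposition (independent $\Gamma(\alpha_k,a)$ weights split into an independent $\Gamma(\sum_k\alpha_k,a)$ total and a $\Dirichlet(\alpha)$ direction) combined with the Poisson superposition/conditioning-to-multinomial factorization, the normalization of $W$ supplying $\sum_v(Wh)_v=\sum_k h_k$. The only cosmetic difference is that you argue distributionally, via the joint law of $(\tilde{h},N,X)$, whereas the paper performs the identical factorization at the density level on the marginal likelihood $p(x\,|\,W,\alpha,a)$.
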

\begin{proof}
    For simplicity, we denote the value of the density of the Dirichlet distribution at $h\in\Delta_{K-1}$ by $\Dirichlet(\alpha)(h)$, and we use the analogous notation for the probability densities of the Gamma, Poisson, Poisson--Gamma and multinomial distributions, respectively. We also use the abbreviations $\Pois$ and $\mathcal{M}$ to refer to the Poisson and multinomial distributions. With this notation, one can show
    \begin{equation}\label{eq:likelihood_gamma_poisson}
        \begin{split}
            &p(x|W,\alpha,a) \\
            &= \bigg(\int_{\Delta_{K-1}} \Dirichlet(\alpha)(h) \mathcal{M}(\textstyle \sum_v x_v, p=Wh)(x) dh\bigg) \bigg(\displaystyle\int_{\R_{>0}} \Pois(\lambda)(\textstyle\sum_v x_v) \Gamma(\textstyle\sum_k \alpha_k, a)(\lambda) d\lambda\bigg) \\
            &= \bigg(\int_{\Delta_{K-1}} \Dirichlet(\alpha)(h) \mathcal{M}(\textstyle \sum_v x_v, p=Wh)(x) \, dh\bigg) \PoissonGamma(\textstyle\sum_k \alpha_k, a)(\textstyle\sum_v x_v) \\
            &= c(x, \textstyle\sum_k \alpha_k, a) \bigg(\displaystyle\int_{\Delta_{K-1}} \Dirichlet(\alpha)(h) \prod_v \Pois((Wh)_v)(x_v) \, dh\bigg)
        \end{split}
    \end{equation}
    for both generative models. The details of the proof are based on the fact that normalized independent Gamma-distributed random variables with identical rate parameter follow a Dirichlet distribution, see \cite[Lemma 1]{buntine2005discrete}.
\end{proof}
The Bayesian analogue of \Cref{lemma_connection_nmf_nmfc2} is the following result.

\begin{corollary}
    Let $\alpha\in\R^K_{>0}$ and $a_k=a>0$ be fixed prior parameters. Then the Gamma--Poisson and Dirichlet--Poisson / LDA likelihoods have the same optima $W\in\Delta_{V-1}^K.$
\end{corollary}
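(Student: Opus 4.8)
The plan is to show that, for the fixed prior parameters $\alpha\in\R^K_{>0}$ and $a_k=a>0$, the Gamma--Poisson likelihood and the Dirichlet--Poisson\,/\,LDA likelihood differ only by a factor that does not depend on $W$, so that their sets of maximizers over $W\in\Delta_{V-1}^K$ coincide. The entire statement is a bookkeeping consequence of \eqref{eq:likelihood_gamma_poisson} established in the preceding lemma; no new estimate is needed. I would first make precise what the Dirichlet--Poisson likelihood is: marginalizing the latent contributions $z_{vkd}$ in the Section~5 model via the Poisson superposition property (the first lemma of \Cref{appendix:equivalent_gen_models}) gives $X_{vd}\sim\Poisson((Wh_d)_v)$ independently across $v$, so the per-document marginal is $p^{\mathrm{DP}}(x_d\mid W,\alpha)=\int_{\Delta_{K-1}}\Dirichlet(\alpha)(h)\prod_v\Pois((Wh)_v)(x_{vd})\,dh$. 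This is exactly the integral appearing on the last line of \eqref{eq:likelihood_gamma_poisson}.

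With that identification, the final line of \eqref{eq:likelihood_gamma_poisson} reads $p^{\mathrm{GaP}}(x_d\mid W,\alpha,a)=c(x_d,\sum_k\alpha_k,a)\,p^{\mathrm{DP}}(x_d\mid W,\alpha)$, where the prefactor $c$ depends on the observed counts and on $\sum_k\alpha_k$ and $a$, but \emph{not} on $W$. Since both likelihoods factor over documents, multiplying over $d$ yields $p^{\mathrm{GaP}}(X\mid W,\alpha,a)=\big(\prod_d c(x_d,\sum_k\alpha_k,a)\big)\,p^{\mathrm{DP}}(X\mid W,\alpha)$, and the bracketed factor is independent of $W$. Taking logarithms gives $\log p^{\mathrm{GaP}}(X\mid W)=\log p^{\mathrm{DP}}(X\mid W)+\text{(term independent of }W)$, so that $\argmax_{W\in\Delta_{V-1}^K}p^{\mathrm{GaP}}=\argmax_{W\in\Delta_{V-1}^K}p^{\mathrm{DP}}$.

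To close the loop with the original LDA likelihood $\int p(h\mid\alpha)\prod_v(Wh)_v^{x_{vd}}\,dh$, I would use that $W\in\Delta_{V-1}^K$ and $h\in\Delta_{K-1}$ force $\sum_v(Wh)_v=1$. Then each Poisson mass factors as $\prod_v\Pois((Wh)_v)(x_{vd})=\frac{e^{-1}}{\prod_v x_{vd}!}\prod_v(Wh)_v^{x_{vd}}$, so $p^{\mathrm{DP}}(x_d\mid W,\alpha)=\frac{e^{-1}}{\prod_v x_{vd}!}\int\Dirichlet(\alpha)(h)\prod_v(Wh)_v^{x_{vd}}\,dh$, again a $W$-free multiple of the LDA integrand. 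Hence all three likelihoods share the same optimum in $W$, which is the asserted Bayesian analogue of \Cref{lemma_connection_nmf_nmfc2}.

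The only subtlety, and the place where the normalization constraint does the real work, is verifying that every factor pulled out in front of the integral is genuinely free of $W$. In particular the cancellation of the $e^{-\sum_v(Wh)_v}$ term in the Poisson normalizer hinges on $\sum_v(Wh)_v=1$, which holds precisely because $W\in\Delta_{V-1}^K$ (and $h\in\Delta_{K-1}$); without this constraint the two likelihoods would differ by a genuinely $W$-dependent factor and their optima could diverge. Beyond tracking these constants, there is no real obstacle: the corollary follows immediately from \eqref{eq:likelihood_gamma_poisson}.
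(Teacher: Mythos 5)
Your proof is correct and follows essentially the same route as the paper: the paper's proof simply cites \eqref{eq:likelihood_gamma_poisson} to conclude that the two likelihoods agree up to the $W$-independent constant $c(X,\sum_k\alpha_k,a)$, which is exactly your central step. The additional details you supply (marginalizing $z$ via Poisson superposition to identify the Dirichlet--Poisson likelihood with the integral in \eqref{eq:likelihood_gamma_poisson}, and using $\sum_v(Wh)_v=1$ to relate it to the original LDA integral) are correct elaborations of what the paper leaves implicit.
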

\begin{proof}
    By \eqref{eq:likelihood_gamma_poisson}, the likelihoods are identical up to a constant $c(X, \textstyle\sum_k \alpha_k, a)$.
\end{proof}

Denoting the $n$-th iterate of the Gamma--Poisson \Cref{alg:gapnmf} by $(W^{(n),\text{\normalfont GaP}}, \beta^{(n),\text{\normalfont GaP}})$ and the $n$-th iterate of Dirichlet--Poisson / LDA \Cref{alg:dpnmf} by $(W^{(n),\text{\normalfont LDA}}, \beta^{(n),\text{\normalfont LDA}})$, the Bayesian analogue of \Cref{lemma:nmf_plsa_iterates} is the following result.

\begin{lemma}\label{lemma:gap_lda_iterates}
    Let $X\in\R^{V\times D}_+$ and let $\alpha\in\R^K_{>0},\, a_k=a>0$ be fixed prior parameters. Let $\lambda_d = \sum_{k'} \alpha_{k'} + \sum_v x_{vd}$. Let $W^{(0)}\in\Delta_{V-1}^K$ and $\beta^{(0)}\in\R_+^{K\times D}$ be arbitrary initial values. Then for all $n\geq1$
    \begin{align*}
        W^{(n), \text{\normalfont GaP}} &= W^{(n), \text{\normalfont LDA}}, \\
        \beta^{(n), \text{\normalfont GaP}} &= \beta^{(n), \text{\normalfont LDA}} = \beta^{(n)}, \\
        \sum_k \beta^{(n)}_{kd} &= \lambda_d.
    \end{align*}
\end{lemma}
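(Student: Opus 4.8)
The plan is to run an induction on $n$ that mirrors the proof of \Cref{lemma:nmf_plsa_iterates}, exploiting the fact that \Cref{alg:gapnmf,alg:dpnmf} differ \emph{only} in how the auxiliary quantity $\widetilde{h}_{kd}^{(n)}$ is formed. In the Dirichlet--Poisson/LDA algorithm, \eqref{eq:eqlogh_lda} gives $\widetilde{h}_{kd}^{(n),\text{LDA}} = \exp\bigl(\psi(\beta_{kd}^{(n)}) - \psi(\sum_{k'}\beta_{k'd}^{(n)})\bigr)$, whereas in the Gamma--Poisson algorithm \eqref{eq:eqlogh_gap} with $b_{kd}=1+a$ gives $\widetilde{h}_{kd}^{(n),\text{GaP}} = \exp(\psi(\beta_{kd}^{(n)}))/(1+a)$. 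The first thing I would record is that, whenever the two algorithms share the same $\beta^{(n)}$, these two arrays agree up to a document-specific but topic-\emph{independent} factor, namely $\widetilde{h}_{kd}^{(n),\text{LDA}} = c_d\,\widetilde{h}_{kd}^{(n),\text{GaP}}$ with $c_d = (1+a)/\exp\bigl(\psi(\sum_{k'}\beta_{k'd}^{(n)})\bigr)$.

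The crux is then the observation that both the $W$-update and the $\beta$-update are invariant under any rescaling $\widetilde{h}_{kd}^{(n)}\mapsto c_d\,\widetilde{h}_{kd}^{(n)}$ by a factor depending only on $d$. The reason is that $\widetilde{h}^{(n)}$ enters each update solely through ratios $\widetilde{h}_{kd}^{(n)}/(W^{(n)}\widetilde{H}^{(n)})_{vd}$, and since $(W^{(n)}\widetilde{H}^{(n)})_{vd}=\sum_{k'}w_{vk'}^{(n)}\widetilde{h}_{k'd}^{(n)}$ picks up the same factor $c_d$, that factor cancels within each summand. I would verify this cancellation explicitly for both update lines, taking care that the $v$-normalization in the $W$-update does not interfere with it.

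With these two facts in hand, the induction is immediate. The base case $n=0$ holds because both algorithms are initialized with the same $W^{(0)}$ and $\beta^{(0)}$. For the inductive step I would assume $W^{(n),\text{GaP}}=W^{(n),\text{LDA}}$ and $\beta^{(n),\text{GaP}}=\beta^{(n),\text{LDA}}$; the first fact then shows the two $\widetilde{h}^{(n)}$ arrays differ only by the factor $c_d$, and the second fact shows this factor cancels, so the updated $W^{(n+1)}$ and $\beta^{(n+1)}$ coincide. This yields equality of the $W$ and $\beta$ iterates for all $n$, in particular for all $n\geq1$.

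It remains to establish $\sum_k\beta_{kd}^{(n)}=\lambda_d$ for $n\geq1$, which I would obtain by a direct computation on the $\beta$-update: summing over $k$ gives $\sum_k\alpha_k + \sum_v x_{vd}\,\bigl(\sum_k w_{vk}^{(n)}\widetilde{h}_{kd}^{(n)}\bigr)/(W^{(n)}\widetilde{H}^{(n)})_{vd}$, where the inner sum telescopes to $(W^{(n)}\widetilde{H}^{(n)})_{vd}$ and leaves $\sum_k\alpha_k + \sum_v x_{vd}=\lambda_d$. This is exactly the Bayesian analogue of the identity $\sum_{k,v}x_{vd}\phi_{vkd}^{(n)}=\sum_v x_{vd}$ used in \Cref{lemma:nmf_plsa_iterates}, the only change being the extra Dirichlet pseudocounts $\sum_{k'}\alpha_{k'}$ entering through the $\alpha_k$ term of the $\beta$-update. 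I expect the sole point requiring real care is confirming the scaling invariance of the $W$-update past its normalization over $v$; everything else is bookkeeping.
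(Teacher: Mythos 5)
Your proposal is correct and follows essentially the same route as the paper's proof: the key observation in both is that $\widetilde{h}_{kd}^{(n),\text{GaP}}$ and $\widetilde{h}_{kd}^{(n),\text{LDA}}$ agree up to a factor independent of $k$, which cancels in the updates (equivalently, the variational parameters $\phi_{vkd}^{(n)}$ coincide), after which induction and the column-sum identity $\sum_k \beta_{kd}^{(n)} = \sum_{k'}\alpha_{k'} + \sum_v x_{vd}$ follow. Your write-up merely spells out the cancellation in the $W$- and $\beta$-updates and the telescoping sum explicitly, which the paper leaves as "follow immediately."
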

\begin{proof}
    The key observation is that the expressions \eqref{eq:eqlogh_lda} and \eqref{eq:eqlogh_gap} of $\Eq{\log h_{kd}}$ in the LDA and Gamma--Poisson \Cref{alg:gapnmf,alg:dpnmf} are identical up to a constant independent of $k$. More precisely, we have
    \begin{align*}
        \widetilde{h}_{kd}^{(0),\text{GaP}} &= \frac{\exp(\psi(\beta_{kd}^{(0)}))}{1+a}, \\
        \widetilde{h}_{kd}^{(0),\text{LDA}} &= \frac{\exp(\psi(\beta_{kd}^{(0)}))}{\exp(\psi(\sum_{k'} \beta_{k'd}^{(0)} ))}.
    \end{align*}
    Consequently, the first iterate of the variational parameters $\phi_{vkd}$ is identical for the GaP and LDA algorithm, which implies $(W^{(1), \text{\normalfont GaP}}, \beta^{(1), \text{\normalfont GaP}}) = (W^{(1), \text{\normalfont LDA}}, \beta^{(1), \text{\normalfont LDA}})$. The induction step and $\sum_k \beta^{(n)}_{kd} = \lambda_d$ follow immediately.
\end{proof}

Finally, the Bayesian analogue of \Cref{corollary:nmf_plsa_fixed_points} is the following result.
\begin{corollary}\label{corollary:gap_lda_fixed_points}
    Let $X\in\R^{V\times D}_+$ and let $\alpha\in\R^K_{>0},\, a_k=a>0$ be fixed prior parameters. Let $(W^*, \beta^*, b^*, \phi^*)$ be a fixed point the Gamma--Poisson VI algorithm. Then $b^*_{kd} = 1 + a$ for all $k,d$ and $(W^*, \beta^*, \phi^*)$ is a fixed point of the Dirichlet--Poisson / LDA VI algorithm. Conversely, let $(W^*, \beta^*, \phi^*)$ be a fixed point of the Dirichlet--Poisson / LDA VI algorithm. Then $(W^*, \beta^*, b^*, \phi^*)$ with $b^*_{kd} = 1 + a$ for all $k,d$ is a fixed point of the Gamma--Poisson VI algorithm.
\end{corollary}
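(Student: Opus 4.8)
The plan is to reduce the fixed-point statement to the per-iterate coincidence already established in \Cref{lemma:gap_lda_iterates}, in exactly the way \Cref{corollary:nmf_plsa_fixed_points} was obtained from \Cref{lemma:nmf_plsa_iterates}. First I would record the elementary fact that $b$ is a trivially estimated variational parameter: in \Cref{alg:gapnmf} the value $b_{kd} = 1 + a_k = 1 + a$ is assigned once before the loop and never updated, so any fixed point $(W^*, \beta^*, b^*, \phi^*)$ of the Gamma--Poisson VI algorithm automatically satisfies $b^*_{kd} = 1 + a$. This disposes of the $b$-coordinate and lets me concentrate on the triple $(W, \beta, \phi)$.

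The heart of the argument is that, once $b = 1+a$ is fixed, the update maps on $(W, \beta, \phi)$ of \Cref{alg:gapnmf,alg:dpnmf} coincide. The only formal difference between the two algorithms is the expression for $\widetilde{h}_{kd}$: the Gamma--Poisson algorithm uses $\widetilde{h}^{\text{GaP}}_{kd} = \exp(\psi(\beta_{kd}))/(1+a)$, whereas the LDA algorithm uses $\widetilde{h}^{\text{LDA}}_{kd} = \exp(\psi(\beta_{kd}))/\exp(\psi(\sum_{k'}\beta_{k'd}))$, and these differ only by the factor $\exp(\psi(\sum_{k'}\beta_{k'd}))/(1+a)$, which is independent of $k$. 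Since both the $w$- and $\beta$-updates enter only through $\phi_{vkd} \propto w_{vk}\widetilde{h}_{kd}$, normalized over $k$ for each fixed $v,d$, this common factor cancels and the resulting $\phi$, $W$, and $\beta$ agree. This is precisely the computation underlying \Cref{lemma:gap_lda_iterates}, so I would quote that lemma rather than redo it.

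With these two observations in hand, both implications follow symmetrically. For the forward direction I take a Gamma--Poisson fixed point $(W^*, \beta^*, b^*, \phi^*)$, note $b^*_{kd} = 1+a$, and observe that $W^* \in \Delta_{V-1}^K$ and $\beta^* \in \R_{>0}^{K\times D}$ are admissible initial values for \Cref{lemma:gap_lda_iterates}. Initializing both algorithms at $(W^*, \beta^*)$, the lemma gives $W^{(1),\text{GaP}} = W^{(1),\text{LDA}}$ and $\beta^{(1),\text{GaP}} = \beta^{(1),\text{LDA}}$; since one Gamma--Poisson step leaves $(W^*, \beta^*)$ invariant, so does one LDA step, and because $\phi^*$ is determined by $(W^*, \beta^*)$ through $\phi_{vkd} \propto w_{vk}\widetilde{h}_{kd}$ (with the same $\phi$ in both algorithms, by the cancellation above), $(W^*, \beta^*, \phi^*)$ is a fixed point of \Cref{alg:dpnmf}. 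The converse is identical with the roles of the two algorithms exchanged, appending $b^*_{kd} = 1+a$ to the given LDA fixed point.

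I expect the only real care to be in pinning down what \emph{fixed point} means across the two parameter tuples: specifically, that $\phi$ is not an independent coordinate but is determined by $(W, \beta)$, and that the trivial parameter $b$ must be matched to $1+a$ for the correspondence to hold. Once the invariance of $\phi$ under a $k$-independent rescaling of $\widetilde{h}$ is granted, the remainder is a direct transcription of \Cref{lemma:gap_lda_iterates} to its fixed points, requiring no new estimates.
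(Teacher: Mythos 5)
Your proposal is correct and takes essentially the same route as the paper, whose entire proof reads ``Follows from \Cref{lemma:gap_lda_iterates} and the update equations of $b$ and $\phi$.'' You have simply filled in the intended details: the trivial, loop-invariant assignment $b_{kd}=1+a$, the cancellation of the $k$-independent factor in $\phi_{vkd}\propto w_{vk}\widetilde{h}_{kd}$, and the transfer of the one-step invariance of $(W^*,\beta^*)$ between the two algorithms via \Cref{lemma:gap_lda_iterates}.
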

\begin{proof}
    Follows from \Cref{lemma:gap_lda_iterates} and the update equations of $b$ and $\phi$.
\end{proof}
\section{Sparse NMF}\label{appendix:sparse_nmf}

In this section, we first provide details to the derivation of the joint update rules for Lasso-penalized NMF with the KL divergence and $\ell_1$ normalization constraints on columns of $W$. We then prove that Lasso-penalized NMF with the KL divergence and an $\ell_1$ normalization constraint on $W$ \eqref{optimization_problem_nmf_sparse_l1} does not induce any sparsity. Finally, we generalize \cite[Lemma 1 and 2]{marmin2023majorization} and show that any normalization constraints on the columns of W and penalty terms on H are competing regularizers on the two matrices.

Let us denote the objective function of $\ell_1$-penalized NMF with the KL divergence and hyperparameter $\lambda>0$ by
\begin{equation*}
    \KLlambda(X \, || \,  WH) \coloneqq \KL(X \, || \, WH) + \lambda \|H\|_1.
\end{equation*}

Using \Cref{corollary:aux_function_nmf} and $W\in\Delta_{V-1}^K$, a joint auxiliary function of $\KLlambda$ is given by
\begin{align*}
    \phi_{vkd}^{(n)} &= \frac{w_{vk}^{(n)} h_{kd}^{(n)}}{(W^{(n)}H^{(n)})_{vd}}, \\
    G((W, H), (W^{(n)}, H^{(n)})) &= -\sum_{v,k,d} x_{vd} \phi_{vkd}^{(n)} \log \frac{w_{vk}h_{kd}}{\phi_{vkd}^{(n)}} + (1+\lambda) \sum_{k,d} h_{kd}.
\end{align*}
Applying the method of Lagrange multipliers, the auxiliary function can be jointly optimized in $W$ and $H$, and the objective function value is non-increasing under the following joint updates:
\begin{align*}
    w^{(n+1)}_{vk} \propto w^{(n)}_{vk} \sum_d x_{vd} \frac{ h^{(n)}_{kd} }{ (W^{(n)}H^{(n)})_{vd} }, \qquad h^{(n+1)}_{kd} = \frac{1}{1+\lambda} h^{(n)}_{kd} \sum_v x_{vd} \frac{ w^{(n)}_{vk} }{ (W^{(n)}H^{(n)})_{vd} }.
\end{align*}

\begin{customlemma}{6.1}\label{appendix:lemma_lasso_insufficient}
    Let $(W, H)$ be a solution of NMF with an $\ell_1$ normalization constraint on $W$ \eqref{optimization_problem_nmf_constraint1} and let $\lambda > 0$. Then $(W, \tfrac{1}{1+\lambda}H)$ is a solution of Lasso-penalized NMF with an $\ell_1$ normalization constraint \eqref{optimization_problem_nmf_sparse_l1}. Conversely, let $(W, H)$ be a solution of \eqref{optimization_problem_nmf_sparse_l1}. Then $(W, (1+\lambda)H)$ is a solution of \eqref{optimization_problem_nmf_constraint1}.
\end{customlemma}
\begin{proof}
    Notice that for any pair $(W, H)$ with $W\in\Delta_{V-1}^K$, we have
    \begin{equation}\label{appendix:sparse_klnmf_aux_euqations}
        \begin{split}
            \KLlambda(X \, || \, W\tfrac{1}{1+\lambda}H) &= \KL(X \, || \, WH) + \log(1+\lambda)\sum_{v,d} x_{vd}, \\
            \KL(X \, || \, W(1+\lambda)H) &= \KLlambda(X \, || \, WH) - \log(1+\lambda)\sum_{v,d} x_{vd}.
        \end{split}   
    \end{equation}
    The result can now be shown using a proof by contradiction. Let $(W, H)$ be a solution of \eqref{optimization_problem_nmf_constraint1} and assume that $(W, \tfrac{1}{1+\lambda}H)$ is not a solution of $\ell_1$-penalized NMF \eqref{optimization_problem_nmf_sparse_l1}. Then there exists $(W^*, H^*)$ with $\KLlambda(X \, || \, W^*H^*) < \KLlambda(X \, ||  \, W\tfrac{1}{1+\lambda}H)$. Applying \eqref{appendix:sparse_klnmf_aux_euqations}, it follows
    \begin{align*}
        \KL(X \, || \, W^*(1+\lambda)H^*) &= \KLlambda(X \, ||   \, W^*H^*) - \log(1+\lambda)\sum_{v,d} x_{vd} \\
        &< \KLlambda(X \, || \, W\tfrac{1}{1+\lambda}H) - \log(1+\lambda)\sum_{v,d} x_{vd} \\
        &= \KL(X \, || \, WH),
    \end{align*}
    which is a contradiction to $(W, H)$ being a solution of \eqref{optimization_problem_nmf_constraint1}. The proof of the second statement is identical.
\end{proof}

More generally, let us now consider \textit{any} reconstruction error $D(X || WH)$ with $D: \R^{V\times D}_+ \times \R^{V\times D}_+ \to \R_+$, \textit{any} regularizer $R(H)$ with $R: \R^{K \times D}_+ \to \R_+$, and \textit{any} normalization constraint $W\in\mathbb{A}_p$, where $0<p<\infty$ and
\begin{equation*}
    \mathbb{A}_p = \{ W \in\R^{V\times K}_+ \, | \, \forall k: \|w_k\|_p = 1\}
\end{equation*}
denotes the set non-negative matrices with $\ell_p$-normalized columns. The goal is to solve the following sparse NMF problem
\begin{equation}\label{eq:optimization_problem_sparse_nmf_general}
    \min_{W \in\mathbb{A}_p, \, H \in\R^{K\times D}_+} D(X \, || \, WH) + R(H).
\end{equation}
In the special cases of beta divergence reconstruction errors, Lasso- or log-regularizers $R(H)=\lambda \| H \|_1$ or $R(H)=\lambda \sum_{k,d} \log(h_{kd})$, and $\ell_1$-normalized $W\in\mathbb{A}_1$, it has been shown that \eqref{eq:optimization_problem_sparse_nmf_general} is equivalent to an optimization problem without any normalization constraints on $W$ \cite[Lemma 1 and 2]{marmin2023majorization}. We now generalize this result to \eqref{eq:optimization_problem_sparse_nmf_general}.

Let us introduce the normalization matrix $D_p(W) \coloneqq \text{diag}(\|w_1\|_p, \ldots, \|w_K\|_p)\in\R^{K\times K}_+$ and the optimization problem
\begin{equation}\label{eq:optimization_problem_sparse_nmf_general_reformulated}
    \min_{W \in\R^{V\times K}_+, \, H \in\R^{K\times D}_+} D(X \, || \, WH) + R(D_p(W)H).
\end{equation}
Then the optimization problems \eqref{eq:optimization_problem_sparse_nmf_general} and \eqref{eq:optimization_problem_sparse_nmf_general_reformulated} are equivalent in the sense that every solution of \eqref{eq:optimization_problem_sparse_nmf_general} is a solution of \eqref{eq:optimization_problem_sparse_nmf_general_reformulated} and normalized solutions of \eqref{eq:optimization_problem_sparse_nmf_general_reformulated} are solutions of \eqref{eq:optimization_problem_sparse_nmf_general}.

\begin{theorem}[{cf.\ \cite[Lemma 1 and 2]{marmin2023majorization}}]
    Let $X\in\R^{V\times D}_+$. Let $(W, H) \in \mathbb{A}_p \times \R^{K\times D}_+ $ be a solution of \eqref{eq:optimization_problem_sparse_nmf_general}. Then $(W, H)$ is a solution of \eqref{eq:optimization_problem_sparse_nmf_general_reformulated}. Conversely, let $(W, H)\in\R^{V\times K}_+ \times\R^{K\times D}_+$ be a solution of \eqref{eq:optimization_problem_sparse_nmf_general_reformulated}. Then $(W D_p(W)^{-1}, D_p(W)H)$ is a solution of \eqref{eq:optimization_problem_sparse_nmf_general}.
\end{theorem}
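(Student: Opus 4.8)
The plan is to exploit the scaling invariance of the matrix product $WH$ to set up an explicit correspondence between the feasible points of the two problems under which the two objective functions agree. For an unconstrained $W\in\R^{V\times K}_+$ with nonzero columns, write $W = (WD_p(W)^{-1})\,D_p(W)$; the first factor $WD_p(W)^{-1}$ lies in $\mathbb{A}_p$ because each of its columns has unit $\ell_p$ norm by construction. This motivates defining the map
\[
    \Phi(W, H) \coloneqq \big(WD_p(W)^{-1},\, D_p(W)H\big),
\]
which sends the full domain $\R^{V\times K}_+\times\R^{K\times D}_+$ into the feasible set $\mathbb{A}_p\times\R^{K\times D}_+$ of \eqref{eq:optimization_problem_sparse_nmf_general}. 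The single identity driving the whole argument is that the objective of the reformulated problem \eqref{eq:optimization_problem_sparse_nmf_general_reformulated} evaluated at $(W,H)$ equals the objective of the constrained problem \eqref{eq:optimization_problem_sparse_nmf_general} evaluated at $\Phi(W,H)$.

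To establish this, I would first record the two elementary consequences of the construction. Because $(WD_p(W)^{-1})(D_p(W)H) = WH$, the reconstruction term is invariant, $D(X\,||\,WH) = D(X\,||\,(WD_p(W)^{-1})(D_p(W)H))$, while the penalty term $R(D_p(W)H)$ is by definition the regularizer $R$ applied to the second component of $\Phi(W,H)$. Hence the reformulated objective at $(W,H)$ coincides with $D(X\,||\,WH) + R(D_p(W)H)$, which is exactly the constrained objective at $\Phi(W,H)$. Second, on the feasible set $\mathbb{A}_p\times\R^{K\times D}_+$ one has $D_p(W) = I$, so $\Phi$ restricts to the identity there and the two objectives agree pointwise on the feasible set.

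With these facts the two directions are short. For the forward direction, let $(W,H)\in\mathbb{A}_p\times\R^{K\times D}_+$ solve \eqref{eq:optimization_problem_sparse_nmf_general}. Given any competitor $(W',H')$ in the full domain, its reformulated objective equals the constrained objective at $\Phi(W',H')$; since $\Phi(W',H')$ is feasible and $(W,H)$ is optimal over the feasible set, this is at least the constrained objective at $(W,H)$, which equals the reformulated objective at $(W,H)$. Thus $(W,H)$ minimizes \eqref{eq:optimization_problem_sparse_nmf_general_reformulated}. For the converse, let $(W,H)$ solve \eqref{eq:optimization_problem_sparse_nmf_general_reformulated}. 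The key identity together with agreement on the feasible set gives that the reformulated objective at $(W,H)$ equals its value at the feasible point $\Phi(W,H)$, so $\Phi(W,H)$ also minimizes the reformulated objective; comparing against an arbitrary feasible $(W',H')$ and again using that the two objectives coincide on the feasible set then shows that $\Phi(W,H) = (WD_p(W)^{-1}, D_p(W)H)$ solves \eqref{eq:optimization_problem_sparse_nmf_general}.

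The only genuine subtlety, and the step where I expect to have to be careful, is the degenerate case of a vanishing column $w_k = 0$, for which $D_p(W)$ is singular and $\Phi$ is undefined. I would dispose of it by restricting attention to $W$ with strictly positive column norms---which is harmless, since a zero column contributes nothing to $WH$ and can be removed---so that $D_p(W)$ is invertible and $\Phi$ is well defined throughout. Everything else is bookkeeping around the single scaling identity and the observation that $\Phi$ acts as the identity on $\mathbb{A}_p$.
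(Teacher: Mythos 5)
Your proof is correct and follows essentially the same route as the paper's: the same decomposition $W = (WD_p(W)^{-1})\,D_p(W)$, the same observation that $D_p(W)=I$ on $\mathbb{A}_p$ (so the two objectives agree on the feasible set), and the same objective-preserving normalization map, only organized as a direct inequality chain instead of the paper's proof by contradiction. You are in fact slightly more careful than the paper, which also writes $D_p^{-1}(W^*)$ without comment on the zero-column degeneracy you flag (though the cleaner fix is to replace a zero column of $W$ by an arbitrary unit-norm vector and zero the corresponding row of $H$, which leaves both $WH$ and $R(D_p(W)H)$ unchanged, rather than to delete the column, which would change $K$).
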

\begin{proof}
    Let $(W, H) \in \mathbb{A}_p \times \R^{K\times D}_+ $ be a solution of \eqref{eq:optimization_problem_sparse_nmf_general}. Assume there exists $(W^*, H^*)\in\R^{V\times K}_+ \times\R^{K\times D}_+$ with
    \begin{equation*}
        D(X \, || \, W^*H^*) + R(D_p(W^*)H^*) < D(X \, || \, WH) + R(D_p(W)H) = D(X \, || \, WH) + R(H).
    \end{equation*}
    Then the normalized matrices $(\widetilde{W}, \widetilde{H})$ with $\widetilde{W}=W^* D_p^{-1}(W^*)$ and $\widetilde{H} = D_p(W^*)H^*$ fulfill
    \begin{equation*}
        D(X \, || \, \widetilde{W}\widetilde{H}) + R(\widetilde{H}) < D(X \, || WH) + R(H),
    \end{equation*}
    contradicting that $(W, H)$ is a solution of \eqref{eq:optimization_problem_sparse_nmf_general}. The proof of the second statement is analogous.
\end{proof}

\begin{example}
    Let $0<p,q<\infty$ and consider \eqref{eq:optimization_problem_sparse_nmf_general} with $W\in\mathbb{A}_p$ and $R(H) = \lambda \| H \|_q^q$ for some weight $\lambda > 0$. Then
    \begin{equation*}
        \min_{W \in\mathbb{A}_p, \, H \in\R^{K\times D}_+} D(X \, || \, WH) + \lambda \| H \|_q^q
    \end{equation*}
    is equivalent to
    \begin{equation*}
        \min_{W \in\R^{V\times K}_+, \, H \in\R^{K\times D}_+} D(X \, || \, WH) + \lambda \sum_k \| w_k \|_p^q \|h_k^T\|_q^q.
    \end{equation*}
    This result demonstrates that the normalization constraint on $W$ and the regularizer $R(H)$ are competing penalties.
\end{example}

\begin{example}[\cite{marmin2023majorization}]
    Let $p=1$ and consider \eqref{eq:optimization_problem_sparse_nmf_general} with the beta divergence $D=D_\beta$ for $\beta\in\R$, $W\in\mathbb{A}_1$, and $R(H) = \lambda \| H \|_1$ for some weight $\lambda > 0$.
    Then
    \begin{equation}\label{eq:optimization_problem_sparse_l1_lasso}
        \min_{W \in\mathbb{A}_1, \, H \in\R^{K\times D}_+} D_\beta(X \, || \, WH) + \lambda \| H \|_1
    \end{equation}
    is equivalent to
    \begin{equation*}
        \min_{W \in\R^{V\times K}_+, \, H \in\R^{K\times D}_+} D_\beta(X \, || \, WH) + \lambda \sum_k \| w_k \|_1 \|h_k^T\|_1,
    \end{equation*}
    suggesting that a Lasso penalty on $H$ together with an $\ell_1$ normalization constraint on $W$ induces mutual sparsity \cite{marmin2023majorization}. However, \Cref{lemma_lasso_insufficient} proves that there is no effect on the sparsity in the case $\beta=1$, demonstrating that the effect on the sparsity also depends on the reconstruction error.
\end{example}

\end{document}